\def\eqref#1{equation~\ref{#1}}
\def\1{\bm{1}}
\DeclareMathAlphabet{\mathsfit}{\encodingdefault}{\sfdefault}{m}{sl}
\SetMathAlphabet{\mathsfit}{bold}{\encodingdefault}{\sfdefault}{bx}{n}
\newcommand{\E}{\mathbb{E}}
\newcommand{\R}{\mathbb{R}}
\newcommand{\wh}{\widehat}
\newcommand{\C}{\mathcal{C}}
\newcommand{\Cbold}{\mathbf{C}}
\newcommand{\G}{\mathcal{G}}
\newcommand{\h}{\mathcal{H}}
\newcommand{\X}{\mathcal{X}}
\newcommand{\p}{\mathcal{P}}
\newcommand{\V}{\mathcal{V}}
\newcommand{\W}{\mathcal{W}}
\newcommand{\N}{\mathcal{N}}
\newcommand{\D}{\mathcal{D}}
\newcommand{\bluerow}[1]{\textcolor{black}{#1}}
\theoremstyle{plain}
\newtheorem{theorem}{Theorem}[section]
\theoremstyle{definition}
\newtheorem{definition}[theorem]{Definition}
\theoremstyle{remark}
\newtheorem{remark}[theorem]{Remark}
\title{SCNode: Spatial and Contextual Coordinates for Graph Representation Learning}
\author{\name Md Joshem Uddin \email mdjoshem.uddin@utdallas.edu \\
      \addr Department of Mathematical Science\\
      The University of Texas at Dallas
      \AND
      \name Astrit Tola \email atola@fsu.edu \\
      \addr Department of Mathematics\\
      Florida State University
      \AND
      \name Varin Singh Sikand \email varin.sikand@utdallas.edu\\
      \addr Department of Computer Science\\
      The University of Texas at Dallas
        \AND
      \name Cuneyt Gurcan Akcora \email cuneyt.akcora@ucf.edu\\
      \addr AI Initiative\\
      University of Central Florida
        \AND
      \name Baris Coskunuzer \email  coskunuz@utdallas.edu\\
      \addr Department of Mathematical Science\\
      The University of Texas at Dallas
      }
\begin{document}

\maketitle

\begin{abstract}
Effective node representation lies at the heart of Graph Neural Networks (GNNs), as it directly impacts their ability to perform downstream tasks such as node classification and link prediction. Most existing GNNs, particularly message passing graph neural networks, rely on neighborhood aggregation to iteratively compute node embeddings. While powerful, this paradigm suffers from well-known limitations of oversquashing, oversmoothing, and underreaching that degrade representation quality. More critically, MPGNNs often assume homophily, where connected nodes share similar features or labels, leading to poor generalization in heterophilic graphs where this assumption breaks down.

To address these challenges, we propose \textit{SCNode}, a \textit{Spatial-Contextual Node Embedding} framework designed to perform consistently well in both homophilic and heterophilic settings. SCNode integrates spatial and contextual information, yielding node embeddings that are not only more discriminative but also structurally aware. Our approach introduces new homophily matrices for understanding class interactions and tendencies. Extensive experiments on benchmark datasets show that SCNode achieves superior performance over conventional GNN models, demonstrating its robustness and adaptability in diverse graph structures.


\end{abstract}

\section{Introduction} \label{sec:intro}
Over the past decade, GNNs have made a breakthrough in graph machine learning by adapting several deep learning models originally developed for domains such as computer vision and natural language processing to the graph setting. By using different approaches and architectures, GNNs produce powerful node embeddings by integrating neighborhood information with domain features. While recent GNN models consistently outperform the state-of-the-art (SOTA) results, in the seminal papers \citep{xu2018powerful, morris2019weisfeiler}, the authors showed that the expressive power of message passing GNNs is about the same as a well-known former method in graph representation learning, i.e., the Weisfeiler-Lehman algorithm~\citep{shervashidze2011weisfeiler}. Furthermore, recent studies show that GNNs suffer from significant challenges such as  oversquashing~\citep{topping2021understanding}, oversmoothing~\citep{oonograph}, and underreaching~\citep{barcelo2020logical}. 

Another major issue with MPGNNs is their reliance on the (useful) homophily assumption, where edges typically connect nodes sharing similar labels and node attributes~\citep{sen2008collective}. However, many real-world scenarios exhibit heterophilic behavior, such as in protein and web networks~\citep{pei2020geom}, where conventional GNNs~\citep{hamilton2017inductive,velivckovic2017graph} may experience significant performance degradation, sometimes even underperforming compared to a multilayer perceptron (MLP)~\citep{zhu2020beyond,luan2022revisiting,luan2024heterophilic}. Therefore, there is a pressing need to develop models that function effectively in both homophilic and heterophilic settings.

We attribute current GNN shortcomings to their inability to capture two complementary sources of information in graphs: \textit{spatial information}, which reflects the class distribution in a node’s immediate neighborhood, and \textit{contextual information}, which measures a node’s relation to class representatives across the entire graph. To address this, we position each node relative to all classes. In the attribute space, we introduce \textit{class landmarks}, reference embeddings that act as beacons for each class, and record the distances from each node’s feature vector to these landmarks as its contextual coordinates. In the graph topology, we encode spatial coordinates by computing the distribution of neighboring node labels within each node’s local subgraph.

\textcolor{black}{Our landmark scheme establishes an implicit coordinate system: by triangulating distances to class landmarks, we locate each node in attribute space much like a GPS. These relative coordinates capture global patterns that standard message-passing GNNs often miss due to underreaching, i.e., the inability to propagate sufficient information from distant but relevant nodes within a limited number of layers. Spatial embeddings encode local label distributions in multi-hop neighborhoods, enriching high-frequency structural signals, while contextual embeddings inject low-frequency global components through landmark-based coordinates. From a spectral perspective, concatenating these embeddings expands the effective receptive field by combining complementary frequency bands. This joint representation mitigates the 1-WL locality bottleneck and enhances expressivity by unifying local structural distributions with long-range semantic context.}

By leveraging relative positioning, we generate feature vectors that seamlessly blend spatial and contextual cues. The resulting \textit{SCNode} model achieves state-of-the-art performance on node classification and link prediction across both homophilic and heterophilic benchmarks. Furthermore, SCNode embeddings can be dropped into any GNN as plug-and-play components, remedying underreaching and consistently boosting performance in diverse graph settings.

{\bf Our contributions} can be summarized as follows:

   \begin{itemize}
\item[$\bullet$] We propose a novel approach to capture \textit{the relative positioning of nodes} with respect to node classes, incorporating both spatial and contextual perspectives.  
\item[$\bullet$] We introduce \textit{class-aware homophily matrices}, providing detailed insights into homophily tendencies and enabling a deeper understanding of class interactions in the graph.  
\item[$\bullet$] Our SCNode model effectively integrates spatial and contextual information, \textit{delivering state-of-the-art performance} in both node classification and link prediction tasks.  
\item[$\bullet$] We demonstrate that SCNode vectors can be easily integrated into any GNN model as \textit{plug-and-play components}, \textit{significantly boosting performance} in both homophilic and heterophilic settings, showcasing their versatility across diverse real-world applications.  
\end{itemize}

\section{Related Work} \label{sec:related}
 
Over the past decade, GNNs have dominated graph representation learning, especially excelling in node classification tasks~\citep{xiao2022graph}.
After the success of Graph Convolutional Networks (GCN)~\citep{kipf2016semi}, using a neighborhood aggregation strategy to perform convolution operations on graph, subsequent efforts focused on modifying, approximating, and extending the GCN approach, e.g., involving attention mechanisms (GAT)~\citep{velivckovic2017graph}, sampling and aggregating information from a node's local neighborhood (GraphSAGE)~\citep{hamilton2017inductive}. 
However, a notable scalability challenge arose due to its dependence on knowing the full graph Laplacian during training. To address this limitation, numerous works emerged to enhance node classification based on GCN, such as methods, importance sampling node (FastGCN)~\citep{chen2018fastgcn}, adaptive layer-wise sampling with skip connections (ASGCN)~\citep{huang2018adaptive}, adapting deep layers to GCN architectures (deepGCN)~\citep{li2019deepgcns}, incorporating node-feature convolutional layers (NFC-GCN)~\citep{zhang2022node}. 

Two fundamental shortcomings of MPGNNs are the loss of structural information within node neighborhoods and the difficulty in capturing long-range dependencies~\citep{khemani2024review,corso2024graph}. To address these inherent issues, numerous studies conducted in the past few years have introduced various enhancements, including diverse aggregation schemes, such as skip connections~\citep{chen2020simple}, geometric methods~\citep{pei2020geom}, aiming to mitigate the risk of losing crucial information. Additionally, advancements like implicit hidden layers~\citep{geng2021training} and multiscale modeling \citep{liu2022mgnni} have been explored to augment the GNN's capabilities. 

Another major issue with MPGNNs is their reliance on the homophily assumption, leading to poor performance in heterophilic networks~\citep{zhu2020beyond, luan2024heterophilic}. Recent efforts have focused on designing GNNs that perform well in heterophilic settings~\citep{lim2021large,luan2022revisiting,zheng2022graph,xu2023node,zhao2024graphany}. 
\textcolor{black}{While prior works analyze class-wise relations via neighborhood label distributions~\citep{ma2021homophily,zhu2023heterophily} or class-compatibility matrices~\citep{zhu2021graph}, SCNode introduces a coordinate-based formulation that combines local structural distributions with global class-aware landmark distances, providing backbone-agnostic feature augmentations that capture long-range class relations.}

\textcolor{black}{In addition, graph kernels, such as Weisfeiler-Lehman~\citep{shervashidze2011weisfeiler}, shortest-path~\cite{Borgwardt2005SP}, and graphlet kernels~\citep{Shervashidze2009Graphlet}, characterize similarity through handcrafted structural mappings~\citep{Vishwanathan2010Survey}. SCNode is related in spirit, but it differs in two important ways. It combines local structural distributions with global class-aware relational signals, providing richer contextual information than fixed kernel similarities. Moreover, SCNode is computationally more efficient, avoiding the quadratic complexity of kernel evaluations by generating embeddings in linear time that can be directly used within GNNs or graph transformers.}

\section{SCNode Framework} \label{sec:methodology}


In the following discussion, we use the notation $\G=(\V,\E,\W, \X)$ where $\V=\{v_i\}_{i=1}^m$ represents
\begin{wrapfigure}[15]{r}{0.5\textwidth}
  \begin{center}
     \includegraphics[width=0.9\linewidth]{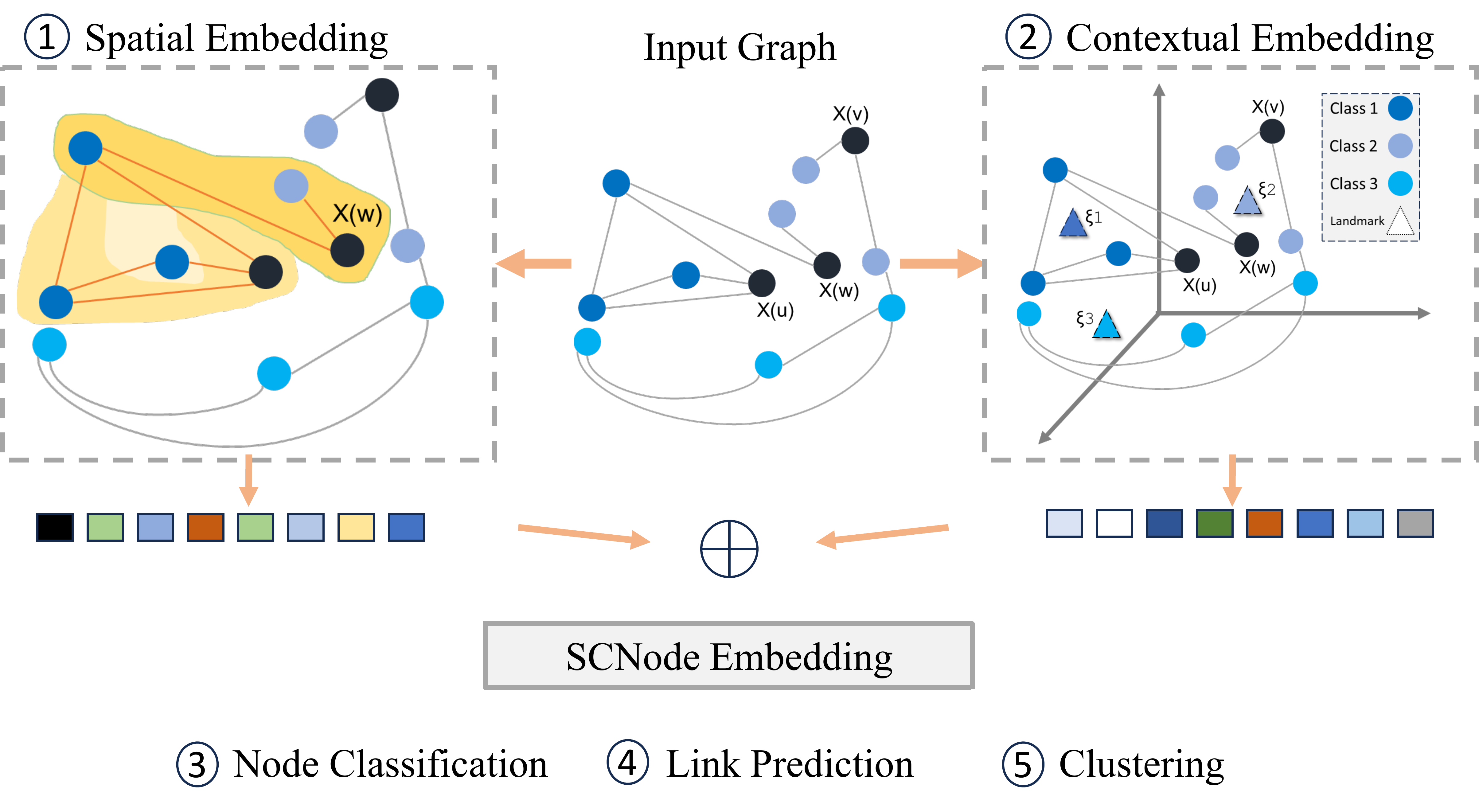}
      \caption{\textbf{SCNode framework}. Spatial and contextual embeddings are incorporated in class-aware contrast-based learning.}
    \label{fig:framework}
      \end{center}
\end{wrapfigure}
 the vertices (nodes), $\E=\{e_{ij}\}$ represents edges, $\W=\{ \omega_{ij}\subset \R^{+} \}$  represents edge weights, and $\X = \{\X_i\}_{i=1}^m \subset \R^n$ represents node features. If no node features are provided, $\X=\emptyset$. Consider a graph where nodes are categorized into $N$ classes. For each node $u$, we aim to construct an embedding $\vec{\gamma}(u)$ of dimension $q \times N$, where $q$ is determined by specifics of the graph such as its directedness, weighted nature, and the format of domain features. 

Specifically, we generate \textit{spatial embeddings} $\{\vec{\alpha}_1(u), \vec{\alpha}_2(u), \dots\}$, derived from local neighborhood information, and \textit{contextual embeddings} $\{\vec{\beta}_1(u), \vec{\beta}_2(u), \dots\}$, derived from node properties specific to the domain. Each embedding, $\vec{\alpha}_i(u)$ and $\vec{\beta}_j(u)$, is $N$-dimensional, with one entry corresponding to each class (SCNode coordinates). These embeddings are then concatenated to form the final embedding $\vec{\gamma}(u)$ of size $q\times N$. For simplicity, we focus on applications in an inductive setting. In \Cref{sec:transductive}, we discuss adaptations of our methods to accommodate transductive settings.

\subsection{Spatial Node Embeddings} \label{sec:spatial}

Spatial node embedding leverages the structure of a graph to measure the proximity of a node to various known classes within that graph. For each node $u $ in the graph $\mathcal{G} $ with vertices $\mathcal{V} $, the $k $-hop neighborhood of $u $, denoted as $\mathcal{N}_k(u) $, includes all vertices $v $ such that the shortest path distance $d(u,v) $ between $u $ and $v $ is at most $k $ hops:
$\mathcal{N}_k(u) = \{v \in \mathcal{V} \mid d(u,v) \leq k\}$.
Here, $d(u,v) $ represents the shortest path length between $u $ and $v $. If no path exists, then $d(u,v) = \infty $. In the case of a directed graph $\mathcal{G} $, edge directions are disregarded when calculating distances. 

Assume that there are $N $ classes of nodes, represented as $\mathcal{C}_1, \mathcal{C}_2, \dots, \mathcal{C}_N $. Let $\mathcal{V}_{\text{train}} $ be the set of nodes with known labels in the training dataset.
The feature vector $\vec{\alpha}_k(u) $ for node $u $ is initialized as the distribution of class occurrences:
  $\vec{\alpha}_k(u) = [a_{k1}, a_{k2}, \dots, a_{kN}] $ 
where $a_{kj} $ counts how many neighbors of $u $ belong to class $\mathcal{C}_j $ within the k-hop neighborhood intersecting with $\mathcal{V}_{\text{train}} $:
 $ a_{kj} = |\{v \in \mathcal{C}_j \mid v \in \mathcal{N}_k(u) \cap \mathcal{V}_{\text{train}}\}|$. Consider the toy example of Figure~\ref{fig:spatial}, where the (k=1)-hop neighborhood of node $u$ contains node $z$ from class $1$ and node $y$ from class $2$. As a result,  $\vec{\alpha}_{1}(u)=[1 \ 1 \ 0]$.
We extend the spatial embeddings to directed and weighted graph cases by incorporating edge directions and weights in additional dimensions (see Appendix Section~\ref{sec:directedweighted}) for the definitions.


\subsection{Contextual Node Embeddings} \label{sec:domain}

In this section, we focus on extracting the most relevant information from the node attribute space by utilizing class-level reference points. Instead of relying solely on local neighborhood information, we assess each node’s position relative to a set of representative class embeddings, or class landmarks, defined in the attribute space. These landmarks serve as stable points that reflect the typical feature profiles of different classes.

To compute contextual embeddings, we measure the similarity or distance between a node’s attribute vector, $\X_u$, and these class landmarks. This approach enables the model to capture how closely a node aligns with each class, regardless of its immediate neighbors. 

 We use the term \textit{contextual node embeddings} to emphasize the integration of node attributes with relational and structural information from the graph. Unlike traditional \textit{attribute embeddings}, which focus solely on 
  \begin{wrapfigure}{r}{0.5\textwidth}
  \centering
  \subfigure[]{ 
    \includegraphics[width=0.40\linewidth]{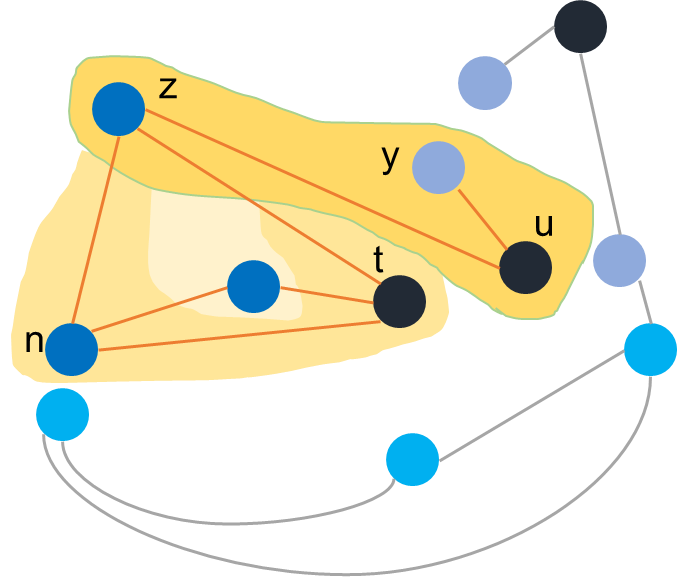}
    \label{fig:spatial}
  }
  \hspace{.5cm}
  \subfigure[]{ 
    \includegraphics[width=0.40\linewidth]{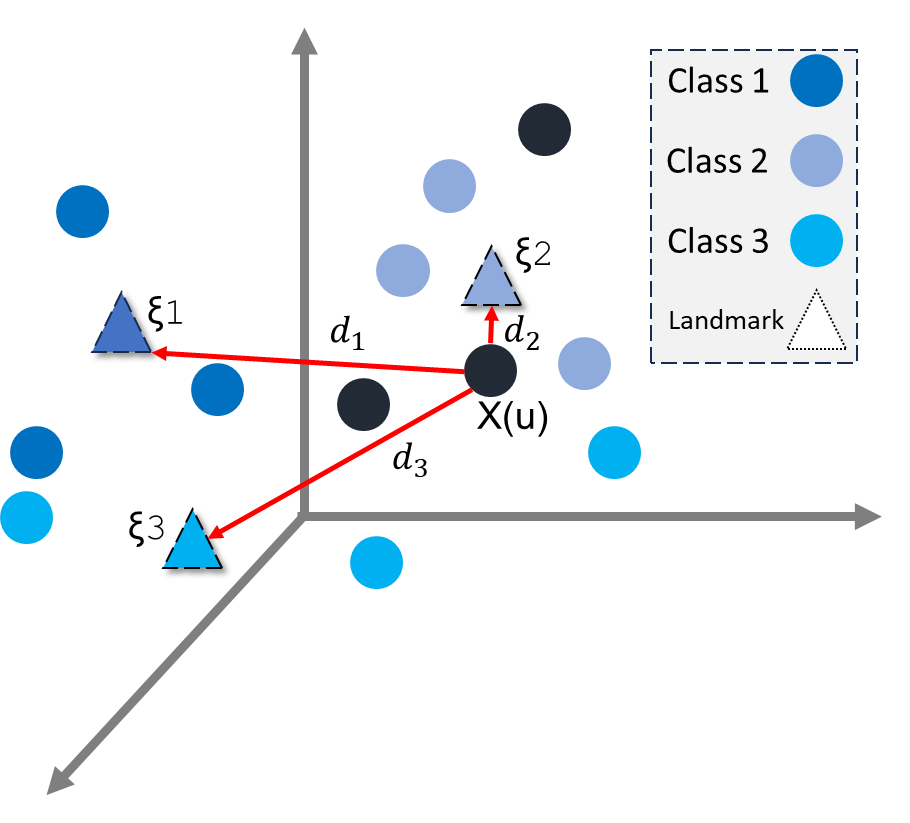}
    \label{fig:contextual}
  }
  \caption{Computing spatial (a) and contextual (b) embeddings for node $u$ on the graph.}
  \label{fig:newmain}
  \end{wrapfigure}
 \noindent embedding node characteristics, contextual embeddings account for both the intrinsic properties of a node and its position within the graph structure. This approach is especially important in scenarios where node features and class behavior are intertwined, such as when the topic of an article influences its citation links. By combining attribute and relational context, our embeddings provide a richer representation of the graph, enabling improved performance in classification tasks.

In the following, we define class representatives (landmarks) in the node attribute space for each class. Let $\{\mathcal{C}_j\}_{j=1}^N$ denote the set of node classes. For each class $\mathcal{C}_j$, we identify the cluster of points 
$\mathcal{Z}_j = \{\mathbf{X}(u) \mid u \in \mathcal{C}_j\}$ 
within the node attribute space $\mathbb{R}^n$, and establish a representative landmark $\xi_j \in \mathbb{R}^n$ for this cluster.



\begin{definition}[Class Landmark]
    Given a node class $\mathcal{C}_j $ and corresponding cluster of node embeddings $\mathcal{Z}_j $, the landmark $\xi_j $ is computed as the centroid of the points in $\mathcal{Z}_j $, normalized by the number of nodes in $\mathcal{C}_j$: 
    $$\xi_j = \dfrac{1}{|\mathcal{C}_j|} \sum_{u \in \mathcal{C}_j} \X(u)$$
\end{definition}

\begin{definition}[Distances to Class Landmarks]
    Let $\mathbf{d} $ be a metric that measures the distance in $\mathbb{R}^n $. For a node $u $ with an embedding $\X(u) $, the distance to the class landmark $\xi_j $ is defined as:
 $ d_j = \mathbf{d}(\X(u), \xi_j)$
    for each class $\mathcal{C}_j $. These distances $d_j $ help in assessing how similar or distinct the node is from each class represented by the landmarks.
\end{definition}

\begin{definition}[Contextual Embedding]
    Given a node $u $ and a set of class landmarks $\{\xi_j\} \subseteq \mathbb{R}^n $, one for each class $\mathcal{C}_j$, the contextual embedding of node $u $ is defined by a vector of distances from the node's embedding to each class landmark: 
    
    \centerline{$ \vec{\beta}(u) = [d_1, d_2, d_3, \dots, d_N] $}
    
    where $d_j = \mathbf{d}(\X(u), \xi_j) $ for each class $\mathcal{C}_j $ (see Figure~\ref{fig:contextual}). This vector $\vec{\beta}(u) $ encapsulates the node’s position relative to each class within the attribute space.
\end{definition}

\paragraph{Class Landmarks.} To extract more detailed information about the domain attributes of nodes for each class, multiple landmarks can be defined. In many cases, the node attribute space lacks an inherent metric, and using different metrics can result in diverse landmarks and contextual embeddings. For example, the first landmark $\xi^1_j \in \mathbb{R}^n$ may represent the center of the cluster $\mathcal{Z}_j$ using the Euclidean metric, while a second landmark $\xi^2_j \in \mathbb{R}^n$ for the same class can be based on Jaccard similarity, capturing the most frequent attributes within $\mathcal{Z}_j$ (see \Cref{sec:feature_details}). 

For each type of landmark, a corresponding distance or similarity measure $\mathbf{d}^k(.,.)$ is defined, such as Euclidean or cosine distance for real-valued attribute vectors, or Jaccard similarity for categorical attributes~\citep{niwattanakul2013using}. 

By leveraging different metrics and their corresponding landmarks, we compute an $N$-dimensional vector  $$\vec{\beta}_k(u) = [d_{k1}, d_{k2}, d_{k3}, \dots, d_{kN}]$$
where $d_{kj} = \mathbf{d}^k(\mathbf{X}(u), \xi^k_j)$ represents the distance (or similarity) between node embeddings $\mathbf{X}(u)$ and the landmark $\xi^k_j$ of class $\mathcal{C}_j$. 

\subsection{SCNode Embedding}
We may expand spatial neighborhoods and extend landmark sets arbitrarily. However, for exposition purposes, we will define SCNode embeddings over a directed graph where each node's neighborhood is considered up to two hops and each class $\mathcal{C}_j $ has two landmarks. We define the SCNode Embedding of a node $u $ by concatenating spatial and contextual embeddings. Specifically, we consider:

\noindent $\bullet$ \textit{Spatial embeddings} from incoming and outgoing 1-hop neighborhoods ($\vec{\alpha}_{1i}(u) $ and $\vec{\alpha}_{1o}(u) $) and the 2-hop neighborhood ($\vec{\alpha}_2(u) $).

\noindent $\bullet$ \textit{Contextual embeddings} based on distances to two class landmarks ($\vec{\beta}_1(u) $ and $\vec{\beta}_2(u) $), representing domain-specific characteristics.

   
\noindent
\begin{minipage}[t]{0.50\textwidth}
\vspace{0pt}
\begin{definition}[SCNode Embedding]The final embedding $\vec{\gamma}(u)$ for node $u$ is a concatenated vector of these embeddings as shown in the right. For example, if the graph is directed, and we want to utilize two landmark types, $\{\xi^1_j\}$ and $\{\xi^2_j\}$, the resulting SCNode vector will be in a $5 \cdot N$-dimensional vector where $N$ is the number of classes. For clarity, we represent $\vec{\gamma}(u)$ in a 2D format ($m \times N$), where each column corresponds to one class and each row represents one type of \textit{SCNode} vector.
\end{definition}
\end{minipage}
\hfill
\begin{minipage}[t]{0.50\textwidth}
\vspace{0pt}
\resizebox{\linewidth}{!}{
\(
\vec{\gamma}(u) =
\left[
\begin{array}{lcr}
\longleftarrow & \vec{\alpha}_{1i}(u) & \longrightarrow \\
\longleftarrow & \vec{\alpha}_{1o}(u) & \longrightarrow \\
\longleftarrow & \vec{\alpha}_2(u) & \longrightarrow \\
\longleftarrow & \vec{\beta}_1(u) & \longrightarrow \\
\longleftarrow & \vec{\beta}_2(u) & \longrightarrow
\end{array}
\right]
\begin{array}{ll}
\text{Spatial} & \text{Incoming 1-ngbd} \\
\text{Spatial} & \text{Outgoing 1-ngbd} \\
\text{Spatial} & \text{2-ngbd} \\
\text{Contextual} & \{d(\mathbf{X}_u,\xi^1_j)\} \\
\text{Contextual} & \{d(\mathbf{X}_u,\xi^2_j)\}
\end{array}
\)
}
\end{minipage}

\textcolor{black}{\paragraph{Relation to label-structure methods.}
SCNode augments each node with two complementary signals: contextual coordinates given by distances to class landmarks in the embedding space, and spatial class-distribution histograms over k-hop neighborhoods. This encodes label structure over nodes without modifying the graph or introducing virtual nodes. In contrast, techniques such as LEGNN~\citep{yu2022label} and GraphGPS~\citep{rampavsek2022recipe} inject graph-level signals or virtual nodes derived from class labels. Conceptually, the class landmarks play the role of prototypes: distances to class-conditioned centroids provide barycentric coordinates in label space~\citep{snell2017prototypical}, but unlike prototypical networks~\citep{hou2022closer}, our prototypes are computed from the training graph and kept fixed during message passing, which decouples metric learning from the backbone and enables transductive inference. Our approach is backbone agnostic and operates at the feature level, so it pairs naturally with standard GNNs and graph transformers. }

 \subsection{Homophily and SCNode} \label{sec:hom} 
 
In this section, we demonstrate the effectiveness of the SCNode framework in analyzing homophily behavior within the graph from multiple perspectives and gaining deeper insights into class interactions. In recent years, several metrics have been introduced to study the effect of homophily on graph representation learning~\citep{lim2021new,jin2022raw,luan2022revisiting,platonov2023characterizing} (see overview in \Cref{sec:heterophily_metrics}). A widely used metric, the \textit{node homophily ratio}, is defined as $\mathcal{H}_{\mbox{node}}(\G)=\frac{1}{|\V|}\sum_{v\in\V}\frac{\eta(v)}{deg(v)}$, where $\eta(v)$ represents the number of adjacent nodes to $v$ sharing the same class. A graph $\G$ is termed \textit{homophilic} if $\mathcal{H}_{\mbox{node}}(\G)\geq 0.5$, and \textit{heterophilic} otherwise. 

Although homophily measures similarity across an entire graph, individual groups within a graph may display different homophily behaviors. Our SCNode approach leverages class interactions and introduces a class-aware homophily score through non-symmetric measures:

\begin{definition} [SCNode Homophily Matrices] Let $(\G,\V)$ be graph with node classes $\{\mathcal{C}_1,\mathcal{C}_2,\dots,\mathcal{C}_N\}$. Let $\vec{\alpha}$ be a spatial or contextual embedding. We define the homophily rate between classes $i$ and $j$ as $h^\alpha_{ij}=\frac{1}{|\mathcal{C}_i|}\sum_{v\in\mathcal{C}_i}\frac{\alpha_v^j}{|\alpha_v|}$ where $\alpha_v^j$ is the $j^{th}$ entry of $\vec{\alpha}_v$. Considering pairwise homophily rates of all classes, we create the $N\times N$ matrix $\mathcal{\textbf{M}}=[h^\alpha_{ij}]$ as the {\em $\alpha$ Homophily Matrix} of $\G$.
\end{definition}

\Cref{tab:cora_hom_matrix} in the Appendix presents examples of SCN homophily matrices. These matrices offer detailed insights into intra-class (homophily) and inter-class (heterophily) interactions across spatial and contextual contexts, with the diagonal elements representing the likelihood of nodes connecting within their own class.

\begin{definition} [$\alpha$ Homophily Ratio] For a given $\alpha$ Homophily Matrix $\mathcal{M}^\alpha$, $\alpha$ Homophily $\h^\alpha_{scn}$ is defined as the average of the diagonal elements. i.e, $\h^\alpha_{scn}= \frac{1}{N}\sum_{i=1}^N h_{ii}$.
\end{definition}

For example, the Spatial-1 Homophily ratio reveals a node's likelihood to connect with nodes of the same class within its immediate neighborhood. Homophily matrices not only introduce new ways to measure homophily but also relate to various existing homophily metrics:

\begin{theorem} \label{thm:hom} For a given $\G=(\V,\E)$, let $\vec{\alpha}_1(v)$ be SCN spatial-1 vector. Let $\wh{\alpha}_1(v)$ be the vector where the entry corresponding to the class of $v$ is set to $0$. Then, 
$1-\mathcal{H}_{\mbox{node}}(\G)=\frac{1}{|\V|}\sum_{v\in\V}\frac{\|\wh{\alpha}_1(v)\|_1}{\|\vec{\alpha}_1(v)\|_1}$.
\end{theorem}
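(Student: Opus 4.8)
The plan is to reduce the claimed identity to a single per-node algebraic fact and then average over $\V$. The central observation is a dictionary between the quantities in the node homophily ratio and the $\ell_1$-norms appearing on the right-hand side. For the spatial-1 vector $\vec{\alpha}_1(v)=[a_{11},\dots,a_{1N}]$, each entry $a_{1j}$ counts the neighbors of $v$ lying in class $\mathcal{C}_j$, so the classes partition the (labeled) neighbors of $v$ into disjoint groups. Writing $c(v)$ for the class of $v$, this yields two translations: first, the quantity $\eta(v)$ in $\mathcal{H}_{\mbox{node}}(\G)$ is exactly the own-class entry $a_{1,c(v)}$; and second, the degree $deg(v)$ equals the total neighbor count $\sum_j a_{1j}=\|\vec{\alpha}_1(v)\|_1$, since every entry is a non-negative integer. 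Here I use the open-neighborhood convention under which $v$ is not counted among its own neighbors (consistent with the toy example giving $\vec{\alpha}_1(u)=[1\ 1\ 0]$), and I assume every neighbor carries a known label so that $\|\vec{\alpha}_1(v)\|_1$ recovers $deg(v)$ exactly rather than merely the count of labeled neighbors.

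Given these translations, the masked vector $\wh{\alpha}_1(v)$, obtained by zeroing the entry $a_{1,c(v)}$, satisfies $\|\wh{\alpha}_1(v)\|_1=\sum_{j\ne c(v)}a_{1j}=\|\vec{\alpha}_1(v)\|_1-a_{1,c(v)}=deg(v)-\eta(v)$. Dividing by $\|\vec{\alpha}_1(v)\|_1=deg(v)$ gives the per-node identity
\[
\frac{\|\wh{\alpha}_1(v)\|_1}{\|\vec{\alpha}_1(v)\|_1}=\frac{deg(v)-\eta(v)}{deg(v)}=1-\frac{\eta(v)}{deg(v)}.
\]
I would then average both sides over all $v\in\V$ and pull the constant $1$ out of the sum, obtaining
\[
\frac{1}{|\V|}\sum_{v\in\V}\frac{\|\wh{\alpha}_1(v)\|_1}{\|\vec{\alpha}_1(v)\|_1}=1-\frac{1}{|\V|}\sum_{v\in\V}\frac{\eta(v)}{deg(v)}=1-\mathcal{H}_{\mbox{node}}(\G),
\]
which is the desired equality by the definition of $\mathcal{H}_{\mbox{node}}(\G)$.

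Mathematically there is no deep obstacle; the entire content lies in the two bookkeeping identifications $\eta(v)=a_{1,c(v)}$ and $deg(v)=\|\vec{\alpha}_1(v)\|_1$. The only points that require care are conventions rather than analysis: (i) the neighborhood must exclude $v$ itself, otherwise both the norm and the own-class entry acquire a spurious $+1$ and the ratio no longer collapses to $1-\eta(v)/deg(v)$; and (ii) isolated nodes with $deg(v)=0$ make both $\eta(v)/deg(v)$ and the norm ratio ill-defined, so I would either restrict the statement to graphs without isolated vertices or adopt the standard convention that such terms are set to $0$ on both sides, which preserves the equality. With these conventions fixed, the proof is exactly the two displays above.
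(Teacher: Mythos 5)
Your proof is correct and follows essentially the same route as the paper's: both rest on the identifications $\eta(v)=a_{1,c(v)}$ and $deg(v)=\|\vec{\alpha}_1(v)\|_1$, derive the per-node identity $\|\wh{\alpha}_1(v)\|_1/\|\vec{\alpha}_1(v)\|_1 = 1-\eta(v)/deg(v)$, and average over $\V$. Your added remarks on conventions (open neighborhoods, fully labeled neighbors as in the appendix's definition $a_i(v)=\#\{u\in\N_1(v)\mid \C(u)=i\}$, and isolated vertices) are points the paper leaves implicit, but they do not change the argument.
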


The proof of the theorem and further discussions on how our SCNode homophily matrices relate to other forms of homophily metrics are detailed in the \Cref{sec:homophily}.


\section{Experiments}

We evaluate SCNode in two tasks: node classification and link prediction. We share our Python implementation at   {\url{https://github.com/joshem163/SCNode}}

\paragraph{Datasets.} We use three widely-used homophilic datasets, which are citation networks, CORA, CITESEER, and PUBMED~\citep{sen2008collective}, two Open Graph Benchmark (OGB) datasets: OGBN-ARXIV and OGBN-MAG~\citep{hu2020open}, and ten heterophilic datasets, including TEXAS, CORNELL, WISCONSIN, and CHAMELEON~\citep{pei2020geom}, \textcolor{black}{as well as filtered versions of SQUIRREL and CHAMELEON, AMAZON-RATINGS, TOLOKER, and QUESTIONS~\citep{platonov2023critical}}. 
The datasets are described in \Cref{sec:feature_details} and their statistics are given in \Cref{tab:dataStats}.

 \begin{table}[h!]
\caption{ Benchmark datasets for node classification. \label{tab:dataStats}}
\footnotesize
\setlength\tabcolsep{4pt}
\centering
    \resizebox{.7\linewidth}{!}{
\begin{tabular}{lrrcccc}
\toprule
{\bf Datasets} & {\bf Nodes}& {\bf Edges \ \ } & {\bf Classes} & \textbf{Features} & \textbf{Tr/Val/Test ($\%$)} & \textbf{Hom.} \\
\midrule
CORA &2,708 &5,429 &7 &1,433 & 48/32/20 & 0.83\\
CITESEER &3,312&4,732 &6 &3,703 & 48/32/20 & 0.72\\
PUBMED & 19,717&44,338 &3 &500 & 48/32/20 & 0.79\\
\midrule
OGBN-ARXIV & 169,343& 1,166,243 & 40&128 & OGB& 0.65\\
OGBN-MAG & 1,939,743&21,111,007 &349 & 128& OGB& 0.30\\
\midrule
TEXAS & 183&309 &5 &1,703 & 48/32/20 & 0.10\\
CORNELL & 183&295 &5 &1,703 & 48/32/20 & 0.39\\
WISCONSIN & 251&499 &5 &1,703 & 48/32/20 & 0.15\\
CHAMELEON & 2,277&36,101 &5 &2,325 & 48/32/20 & 0.25\\

\midrule
\bluerow{ SQUIRREL$^{*}$}         & 2,223   & 46,998   &5& 2,089 &48/32/20 & 0.19\\
\bluerow{CHAMELEON$^{*}$}       & 890     & 8,854    &5& 2,325 &48/32/20 &0.24 \\
\bluerow{ROMAN-EMPIRE}     & 22,662  & 32,927   &18& 300    &50/25/25&0.05 \\
\bluerow{AMAZON-RATINGS}  & 24,492  & 93,050   & 5&300     &50/25/25&0.37\\
\bluerow{TOLOKERS}      & 11,758  & 5,19,000   &2& 10       &50/25/25&0.63\\
\bluerow{QUESTIONS}       & 48,921  & 153,540  &2& 301     &50/25/25&0.89\\
\bottomrule
\end{tabular}}
\\[2pt] 
\parbox{\textwidth}{\centering $^{*}$ denotes the filtered version of the dataset.}
\end{table}

\paragraph{Experimental Setup.} 
We evaluate SCNode on three citation networks and four heterophilic graph datasets, following the experimental protocol established by Bodnar et al.~\citep{bodnar2022neural}. For these datasets, we randomly split the nodes of each class into $48\% $ for training, $32\% $ for validation, and $20\% $ for testing, and report the average accuracy over 10 independent runs.
For the OGB datasets, OGBN-ARXIV and OGBN-MAG, we use the official train/validation/test splits provided by the Open Graph Benchmark (OGB)~\citep{hu2020open}, and report the classification accuracy accordingly. \textcolor{black}{For the remaining datasets,  we adopt the same splitting strategy as outlined in~\citep{platonov2023critical}}.
We give the details of SCNode embeddings for each dataset in ~\cref{sec:feature_details}. The dimensions of the embeddings used for each dataset are given in \Cref{tab:feature_counts}.

For classification, we employ a Multi-Layer Perceptron (MLP) with a single hidden layer consisting of 100 neurons. We use the ReLU activation function, a learning rate of 0.001, and the Adam optimizer for training up to 500 epochs. To mitigate overfitting, we apply L2 regularization with a weight decay of 0.0001.

All non-OGB experiments were conducted on a local machine equipped with an Apple M2 chip (8-core CPU, 10-core GPU, 6-core Neural Engine) and 16 GB of RAM. OGB experiments were run on Google Colab, using a system with an Intel(R) 2.20GHz CPU, NVIDIA V100 GPU, and 25.5 GB of RAM.

\paragraph{Runtime.} SCNode is computationally efficient. SCNode requires approximately $4$ hours for OGBN-ARXIV and $10$ hours for OGB-MAG to create all embeddings, and about $10$ minutes to train for all OGB datasets. End-to-end, our model processes three citation network datasets and three WebKb datasets, including the CHAMELEON dataset, in under a minute. PUBMED takes about $20$ minutes. For comparison, RevGNN-Deep requires $13.5$ days and RevGNN-Wide takes $17.1$ days to train for 2000 epochs on a single NVIDIA V100 for the OGB datasets~\citep{li2021training}.

\subsection{Node Classification Results}


\paragraph{Baselines.} We compare SCNode against a range of state-of-the-art models(~\Cref{tab:maincomparison,tab:heterophilic,tab:OGBResults}), including three classical approaches: GCN~\citep{kipf2016semi}, GraphSAGE~\citep{hamilton2017inductive}, and GAT~\citep{velivckovic2017graph}, as well as a graph transformer model. All baselines in~\Cref{tab:maincomparison,tab:heterophilic,tab:OGBResults} are evaluated in the transductive setting, except for GraphSAGE, which follows the inductive setting.

\paragraph{Performance.} We present the node classification results for benchmark homophilic and heterophilic datasets in~\Cref{tab:maincomparison,tab:heterophilic} and for OGB datasets in \Cref{tab:OGBResults}.  On the two homophilic benchmarks, SCNODE achieves state-of-the-art performance, consistently outperforming prior methods. Even more strikingly, on the four heterophilic graphs, SCNODE delivers dramatic gains, surpassing the previous best by 2 – 4 points. \textcolor{black}{For the remaining five heterophilic datasets, SCNODE achieves the best results on two and demonstrates competitive performance on the others, as shown in~\Cref{tab:heterophilic}}.
This uniform superiority across low- and high-homophily settings underscores SCNODE’s ability to adaptively integrate both local feature similarity and global topological cues, yielding a single architecture that excels regardless of underlying graph structure. Moreover, on the OGB benchmarks, SCNODE achieves results within 3-4 \% of the state-of-the-art, despite relying on a far more compact architecture than competing deep-learning models.


\begin{table}[t]
\centering
\caption{{\bf Node Classification Performance.} Accuracy results for node classification tasks. Baseline results up to Gen-NSD are sourced from~\citep{bodnar2022neural,li2022finding}, while the remaining results are taken from their respective original papers. The best results are shown in \textcolor{blue}{\bf bold}, while the second-best results are \textcolor{blue}{\underline{underlined}}. The last column reports the average performance gap (\%) relative to the best result across all datasets.\label{tab:maincomparison}}
\vspace{.2cm}
\resizebox{\linewidth}{!}{
\setlength\tabcolsep{4pt}
\begin{tabular}{lccc || cccc || c}
\toprule
 \textbf{Dataset} & \textbf{CORA} & \textbf{CITESEER} & \textbf{PUBMED} & \textbf{TEXAS} & \textbf{CORNELL} & \textbf{WISC.} & \textbf{CHAM.} & \textbf{Avg.\ Gap} \\
 {\footnotesize Node Homophily} & {\footnotesize 0.83} & {\footnotesize 0.72} & {\footnotesize 0.79} & {\footnotesize 0.10} & {\footnotesize 0.39} & {\footnotesize 0.15} & {\footnotesize 0.25} & \% \\
\midrule
GCN~\citep{kipf2016semi}        & 86.14{\scriptsize $\pm$1.10} & 75.51{\scriptsize $\pm$1.28} & 87.22{\scriptsize $\pm$0.37} & 56.22{\scriptsize $\pm$5.81} & 60.54{\scriptsize $\pm$5.30} & 51.96{\scriptsize $\pm$5.17} & 65.94{\scriptsize $\pm$3.23} & 19.9 \\
GraphSAGE~\citep{hamilton2017inductive} & 86.26{\scriptsize $\pm$1.54} & 76.04{\scriptsize $\pm$1.30} & 88.45{\scriptsize $\pm$0.50} & 75.95{\scriptsize $\pm$5.01} & 75.95{\scriptsize $\pm$5.01} & 81.18{\scriptsize $\pm$5.56} & 58.73{\scriptsize $\pm$1.68} & 10.4 \\
GAT~\citep{velivckovic2017graph}       & 85.03{\scriptsize $\pm$1.61} & 76.55{\scriptsize $\pm$1.23} & 87.30{\scriptsize $\pm$1.10} & 54.32{\scriptsize $\pm$6.30} & 61.89{\scriptsize $\pm$5.05} & 49.41{\scriptsize $\pm$4.09} & 60.26{\scriptsize $\pm$2.50} & 20.1 \\
Geom-GCN~\citep{pei2020geom}             & 85.35{$\pm$\scriptsize1.57} & \textcolor{blue}{\bf 78.02{\scriptsize $\pm$1.15}} & 89.95{\scriptsize $\pm$0.47} & 66.76{\scriptsize $\pm$2.72} & 60.54{\scriptsize $\pm$3.67} & 64.51{\scriptsize $\pm$3.66} & 60.00{\scriptsize $\pm$2.81} & 15.8 \\
H2GCN~\citep{zhu2020beyond}             & 87.87{$\pm$\scriptsize1.20} & 77.11{\scriptsize $\pm$1.57} & 89.49{\scriptsize $\pm$0.38} & 84.86{\scriptsize $\pm$7.23} & 82.70{$\pm$\scriptsize5.28} & 87.65{\scriptsize $\pm$4.98} & 60.11{\scriptsize $\pm$2.15} &  8.9 \\
GPRGCN~\citep{chien2020adaptive}       & 87.95{$\pm$\scriptsize1.18} & 77.13{\scriptsize $\pm$1.67} & 87.54{\scriptsize $\pm$0.38} & 81.35{\scriptsize $\pm$5.32} & 78.11{\scriptsize $\pm$6.55} & 82.55{\scriptsize $\pm$6.23} & 46.58{\scriptsize $\pm$1.71} &  9.7 \\
GCNII~\citep{chen2020simple}            & 88.37{\scriptsize $\pm$1.25} & 77.33{\scriptsize $\pm$1.48} & 90.15{\scriptsize $\pm$0.43} & 77.57{\scriptsize $\pm$3.83} & 77.86{\scriptsize $\pm$3.79} & 80.39{\scriptsize $\pm$3.40} & 63.86{\scriptsize $\pm$3.04} & 8.2 \\
WRGAT~\citep{suresh2021breaking}        & 88.20{\scriptsize $\pm$2.26} & 76.81{\scriptsize $\pm$1.89} & 88.52{\scriptsize $\pm$0.92} & 83.62{\scriptsize $\pm$5.50} & 81.62{\scriptsize $\pm$3.90} & 86.98{\scriptsize $\pm$3.78} & 65.24{\scriptsize $\pm$0.87} &  8.1 \\
LINKX~\citep{lim2021large}              & 84.64{\scriptsize $\pm$1.13} & 73.19{\scriptsize $\pm$0.99} & 87.86{\scriptsize $\pm$0.77} & 74.60{\scriptsize $\pm$8.37} & 77.84{\scriptsize $\pm$5.81} & 75.49{\scriptsize $\pm$5.72} & 68.42{\scriptsize $\pm$1.38} & 10.6 \\
NLGAT~\citep{liu2021non}                & \textcolor{blue}{\underline{88.50{\scriptsize $\pm$1.80}}} & 76.20{\scriptsize $\pm$1.60} & 88.20{\scriptsize $\pm$0.30} & 62.60{\scriptsize $\pm$7.10} & 54.70{\scriptsize $\pm$7.60} & 56.90{\scriptsize $\pm$7.30} & 65.70{\scriptsize $\pm$1.40} & 17.5 \\
GloGNN++~\citep{li2022finding}          & 88.33{\scriptsize $\pm$1.09} & 77.22{\scriptsize $\pm$1.78} & 89.24{\scriptsize $\pm$0.39} & 84.05{\scriptsize $\pm$4.90} & 85.95{\scriptsize $\pm$5.10} & 88.04{\scriptsize $\pm$3.22} & 71.21{\scriptsize $\pm$1.84} &  4.5 \\
GGCN~\citep{yan2022two}                 & 87.95{\scriptsize $\pm$1.05} & 77.14{\scriptsize $\pm$1.45} & 89.15{\scriptsize $\pm$0.37} & 84.86{\scriptsize $\pm$4.55} & 85.68{\scriptsize $\pm$6.63} & 86.86{\scriptsize $\pm$3.29} & 71.14{\scriptsize $\pm$1.84} &  4.7 \\
Gen-NSD~\citep{bodnar2022neural}         & 87.30{\scriptsize $\pm$1.15} & 76.32{\scriptsize $\pm$1.65} & 89.33{\scriptsize $\pm$0.35} & 82.97{\scriptsize $\pm$5.13} & 85.68{\scriptsize $\pm$6.51} & 89.21{\scriptsize $\pm$3.84} & 67.93{\scriptsize $\pm$1.58} &  5.3 \\
ACM-GCN~\citep{luan2022revisiting}               & 87.91{\scriptsize $\pm$0.95} & 77.32{\scriptsize $\pm$1.70} & 90.00{\scriptsize $\pm$0.52} & 87.84{\scriptsize $\pm$4.40} & 85.14{\scriptsize $\pm$6.07} & 88.43{\scriptsize $\pm$3.22} & 69.14{\scriptsize $\pm$1.91} & 4.3 \\
LRGNN~\citep{liang2023predicting}       & 88.30{\scriptsize $\pm$0.90} & 77.50{\scriptsize $\pm$1.30} & \textcolor{blue}{\underline{90.20{\scriptsize $\pm$0.60}}} & 90.30{\scriptsize $\pm$4.50} & 86.50{\scriptsize $\pm$5.70} & 88.20{\scriptsize $\pm$3.50} & \textcolor{blue}{\underline{79.16{\scriptsize $\pm$2.05}}} &  2.2 \\
Ordered-GNN ~\citep{song2023ordered}         & 88.37{\scriptsize $\pm$0.75} & 77.31{\scriptsize $\pm$1.73} & 90.15{\scriptsize $\pm$0.38} & 86.22{\scriptsize $\pm$4.12} & \textcolor{blue}{\underline{87.03{\scriptsize $\pm$4.73}}} & 88.04{\scriptsize $\pm$3.63} & 72.28{\scriptsize $\pm$2.29} &  3.7 \\
TEDGCN~\citep{yan2024trainable}         & 87.90{\scriptsize $\pm$1.31} & 77.81{\scriptsize $\pm$1.72} & 84.84{\scriptsize $\pm$1.21} & \textcolor{blue}{\underline{91.41{\scriptsize $\pm$3.62}}} & 86.53{\scriptsize $\pm$3.80} & \textcolor{blue}{\underline{91.42{\scriptsize $\pm$4.22}}} & 67.33{\scriptsize $\pm$1.71} &  4.1 \\
FGSAM-SAGE~\citep{luo2024fast}&88.36{\scriptsize $\pm$1.51} &77.13{\scriptsize $\pm$0.69}&89.75{\scriptsize $\pm$0.49}&81.35{\scriptsize $\pm$5.10}&82.43{\scriptsize $\pm$3.83}&86.47{\scriptsize $\pm$4.31}&51.34{\scriptsize $\pm$2.96}&8.4\\
\midrule
\textbf{SCNode} & \textcolor{blue}{\bf 88.65$\pm$\scriptsize1.25} & \textcolor{blue}{\underline{ 77.83$\pm$\scriptsize 1.60}} & \textcolor{blue}{\bf 90.53$\pm$\scriptsize0.61} & \textcolor{blue}{\bf 94.59$\pm$\scriptsize5.25} & \textcolor{blue}{\bf 88.09$\pm$\scriptsize 1.91} & \textcolor{blue}{\bf 92.01$\pm$\scriptsize 4.06} & \textcolor{blue}{\bf 84.08$\pm$\scriptsize1.55} &  0.1 \\
\bottomrule
\end{tabular}}
\vspace{-.25in}
\end{table}

\begin{table*}[h]
\centering
\caption{\footnotesize \textcolor{black}{\textbf{More Heterophilic Datasets.} Node classification results of our models compared to GNN and graph transformer baselines on heterophilic benchmarks. The last column reports the average gap to the best-performing model in each dataset. Baseline models are sourced from~\citep{platonov2023critical,yadati2025localformer,luo2024classic}}}
\label{tab:heterophilic}
\resizebox{.9\textwidth}{!}{
\begin{tabular}{lccccc|c}
\toprule
\textbf{Model} & \textbf{Squirrel*} & \textbf{Chameleon*} & \textbf{Amazon}  & \textbf{Tolokers} & \textbf{Questions} & \textbf{Avg. Gap} \\
\midrule
GCN~\citep{kipf2016semi} & 34.50{\scriptsize$\pm$1.61}	&38.53{\scriptsize$\pm$2.23} &47.94{\scriptsize$\pm$0.62}   &83.64{\scriptsize$\pm$0.67}&	63.04{\scriptsize$\pm$1.61} & 8.58 \\
GraphSAGE~\citep{hamilton2017inductive} & 35.19{\scriptsize$\pm$2.49} & 36.00{\scriptsize$\pm$2.76} &\color{blue} \underline{53.44{\scriptsize$\pm$0.48}}   &82.43{\scriptsize$\pm$0.44}& 75.27{\scriptsize$\pm$1.20}  & 5.64 \\
GAT~\citep{velivckovic2017graph} & 34.41{\scriptsize$\pm$0.96} & 38.79{\scriptsize$\pm$3.07}&	49.25{\scriptsize$\pm$0.73} &	44.98{\scriptsize$\pm$1.96} &	73.42{\scriptsize$\pm$1.63} & 13.94 \\
H2GCN~\citep{zhu2020beyond} & 35.10\scriptsize{$\pm$1.15} & 26.75\scriptsize{$\pm$3.64}& 36.47\scriptsize{$\pm$0.23}  &73.35{\scriptsize$\pm$1.01}& 63.59\scriptsize{$\pm$1.46} & 15.05 \\
GPRGNN~\citep{chien2020adaptive} & 38.95\scriptsize{$\pm$1.99} & 39.93\scriptsize{$\pm$3.30}& 44.88\scriptsize{$\pm$0.82}   &72.94{\scriptsize$\pm$0.97}& 55.48\scriptsize{$\pm$0.91} & 11.67 \\
CPGNN~\citep{zhu2021graph} & 30.04\scriptsize{$\pm$2.03} & 33.00\scriptsize{$\pm$3.15}& 39.79\scriptsize{$\pm$0.77}  &73.36{\scriptsize$\pm$1.01}& 65.96\scriptsize{$\pm$1.95} & 13.68 \\
FSGNN~\citep{maurya2022simplifying} & 35.92\scriptsize{$\pm$1.32} & 35.60\scriptsize{$\pm$2.97}& 52.74\scriptsize{$\pm$0.83}  &82.76{\scriptsize$\pm$0.61}& \color{blue}\underline{78.86\scriptsize{$\pm$0.92}} & 4.93 \\
GloGNN~\citep{li2022finding} & 35.11\scriptsize{$\pm$1.24} & 25.90\scriptsize{$\pm$1.82}& 36.89\scriptsize{$\pm$0.14}  &73.39{\scriptsize$\pm$1.17}& 65.74\scriptsize{$\pm$1.19} & 14.70 \\
LRGNN~\citep{liang2023predicting} & 39.51\scriptsize{$\pm$2.12} & 41.24\scriptsize{$\pm$2.95} & 42.23\scriptsize{$\pm$4.85}   &-& 66.41\scriptsize{$\pm$1.75} & 9.08 \\
NodeFormer~\citep{wu2022nodeformer} & 38.52\scriptsize{$\pm$1.57} & 34.78\scriptsize{$\pm$4.14}&  43.86\scriptsize{$\pm$0.35}  &78.10\scriptsize{$\pm$1.03}& 74.27\scriptsize{$\pm$1.14} & 8.20 \\
GraphGPS~\citep{rampavsek2022recipe} &35.49{\scriptsize$\pm$2.00} &41.04{\scriptsize$\pm$1.11}&  44.94{\scriptsize$\pm$0.77}   &\color{blue}\underline{83.71\scriptsize{$\pm$0.48}}& 72.15 \scriptsize{$\pm$1.16}& 6.64 \\
SGFormer~\citep{wu2023sgformer} & 35.81\scriptsize{$\pm$2.02} & \textcolor{blue}{\underline{42.54\scriptsize{$\pm$3.58}}}&  48.18\scriptsize{$\pm$0.71} & 83.33\scriptsize{$\pm$0.68}& 73.05\scriptsize{$\pm$1.16} & 5.52 \\
Polynormer~\citep{deng2024polynormer} & \textcolor{blue}{\underline{40.87\scriptsize{$\pm$1.96}}} & 41.82\scriptsize{$\pm$3.45}& \bf \color{blue}54.81\scriptsize{$\pm$0.49} & \bf \color{blue}84.83\scriptsize{$\pm$0.72} &  \bf \color{blue}78.92\scriptsize{$\pm$0.89} & \textbf{1.86} \\
\midrule
\bf SCNode&\textcolor{blue}{\bf 45.85\scriptsize{$\pm$2.36}}&\textcolor{blue}{\bf 46.12\scriptsize{$\pm$3.21}}&52.57\scriptsize{$\pm$0.42}&79.99\scriptsize{$\pm$0.41}&76.04\scriptsize{$\pm$1.62}& 1.99 \\
\bottomrule
\end{tabular}}
\vspace{-.2in}
\end{table*}

\begin{table}[ht]
\centering
\begin{minipage}{0.45\linewidth}
\centering
\caption{\footnotesize Classification accuracy of baselines and our SCNode model on \textbf{OGBN datasets}.}
\label{tab:OGBResults}
\footnotesize
\resizebox{0.9\linewidth}{!}{ 
\setlength\tabcolsep{3pt} 
\begin{tabular}{lcc}
\toprule
\textbf{Model} & \footnotesize \textbf{ARXIV} & \footnotesize \textbf{MAG} \\
\midrule
GCN~\citep{kipf2016semi} & 71.74 & 34.87 \\
GSAGE~\citep{hamilton2017inductive} & 71.49 & 37.04 \\
GAT~\citep{velivckovic2017graph} & 73.91 & 37.67 \\
DeepGCN~\citep{li2019deepgcns} & 72.32 & -- \\
DAGNN~\citep{liu2020towards} & 72.09 & -- \\
UniMP-v2~\citep{shi2020masked} & \textcolor{blue}{\underline{73.92}} & -- \\
RevGAT~\citep{li2021training} & \textcolor{blue}{\textbf{74.26}} & -- \\
RGCN~\citep{yu2022heterogeneous} & -- & 47.96 \\
HGT~\citep{yu2022heterogeneous} & -- & 49.21 \\
R-HGNN~\citep{yu2022heterogeneous} & -- & 52.04 \\
LEGNN~\citep{yu2022label} & 73.71 & \textcolor{blue}{\underline{53.78 }}\\
S3GCL~\citep{wan2024s3gcl} & 71.36 & -- \\
LMSPS~\citep{liu2024simgcl}&--&\textcolor{blue}{\textbf{54.83}}\\
\midrule
\textbf{SCNode} & 71.56 & 50.03 \\
\bottomrule
\end{tabular}}
\end{minipage}
\hfill
\begin{minipage}{0.5\linewidth}
\centering
\caption{\footnotesize Vanilla-GNN vs. SCNode+GNN (SCN-GNN) accuracy results for node classification.}
  \label{tab:GNN}
  \resizebox{0.9\linewidth}{!}{
  \begin{tabular}{llcccc}
    \toprule
    \textbf{Dataset} & \textbf{Model} & \textbf{GNN} & \textbf{SCN-GNN} & \textbf{Imp.($\uparrow$)}\\
    \midrule
    \multirow{4}{*}{\textbf{CORA}} & GCN & 86.14{\scriptsize $\pm$1.10}    &  88.43{\scriptsize $\pm$0.92} & \textcolor{violet}{\bf 2.29}\\
                              & SAGE & 86.26{\scriptsize $\pm$1.54}       &  \textbf{88.98{\scriptsize $\pm$1.37}} & \textcolor{violet}{\bf 2.72}   \\
                          & GAT & 85.03{\scriptsize $\pm$1.61} &  87.18{\scriptsize $\pm$2.12} &  \textcolor{violet}{\bf 2.15} \\
                          & LINKX & 81.63{\scriptsize $\pm$1.57} &  83.80{\scriptsize $\pm$1.43} &  \textcolor{violet}{\bf 2.17} \\
    \midrule
    \multirow{4}{*}{\textbf{CITESEER}} & GCN & 75.39{\scriptsize $\pm$1.92}    &  77.18{\scriptsize $\pm$1.98} & \textcolor{violet}{\bf 1.79}\\
                              & SAGE & 74.65{\scriptsize $\pm$1.58}       &  77.36{\scriptsize $\pm$1.18} & \textcolor{violet}{\bf 2.71}   \\
                          & GAT & 74.85{\scriptsize $\pm$1.46} &   \textbf{77.47{\scriptsize $\pm$1.59}} &  \textcolor{violet}{\bf 2.62} \\
                          & LINKX & 70.51{\scriptsize $\pm$2.48} & 73.04{\scriptsize $\pm$1.76} &  \textcolor{violet}{\bf 2.53} \\
    \midrule
    \multirow{4}{*}{\textbf{TEXAS}} & GCN & 56.22{\scriptsize $\pm$5.81}    &    70.81{\scriptsize $\pm$6.43}     &\textcolor{violet}{\bf 14.59} \\
                               & SAGE & 75.95{\scriptsize $\pm$5.01}      &  87.84{\scriptsize $\pm$6.65}    & \textcolor{violet}{\bf 11.89}\\
                           & GAT & 54.32{\scriptsize $\pm$6.30}  &  62.16{\scriptsize $\pm$5.70}  & \textcolor{violet}{\bf 7.84} \\
                           & LINKX & 74.60{\scriptsize $\pm$8.37} &  \textbf{93.78{\scriptsize $\pm$4.04}} &  \textcolor{violet}{\bf 19.18} \\
    \midrule
    \multirow{4}{*}{\textbf{WISCONSIN}} & GCN & 51.96{\scriptsize $\pm$5.17}    &    63.53{\scriptsize $\pm$4.45}     &\textcolor{violet}{\bf 11.57} \\
                               & SAGE & 81.18{\scriptsize $\pm$5.56}      &  84.90{\scriptsize $\pm$3.21}    & \textcolor{violet}{\bf 3.72}\\
                           & GAT & 49.41{\scriptsize $\pm$4.09}  &  55.29{\scriptsize $\pm$6.64}  & \textcolor{violet}{\bf 5.88} \\
                           & LINKX & 75.49{\scriptsize $\pm$5.72} &  \textbf{91.18{\scriptsize $\pm$1.39}} &  \textcolor{violet}{\bf 15.69} \\
    \bottomrule
  \end{tabular}}
\end{minipage}
\end{table}

\noindent {\bf SCN-GNNs: GNNs with SCNode Embeddings.} To evaluate the integration of SCNode with GNNs as plug-and-play components, \textit{we replaced the initial node embeddings in GNNs with SCN embeddings} and assessed their effectiveness. We tested three classical GNN models (GCN, GraphSAGE, and GAT), along with the recent LINKX model. A two-layer GNN framework was implemented using the Adam optimizer with an initial learning rate of 0.001, a dropout rate of 0.5, a weight decay of 5e-4, and 32 hidden channels. 
The results, shown in \Cref{tab:GNN} and \Cref{fig:CCGNN,fig:CCGNN2},  indicate significant performance improvements when using SCN embeddings. These embeddings accelerate convergence and maintain a consistent accuracy advantage over the vanilla models.
Notably, combining SCNode with the LINKX model yields the best results across all baselines for both datasets, showcasing the potential of SCNode to enhance GNN performance. This improvement is attributed to the effective integration of spatial and contextual information provided by SCNode embeddings. The accuracy gap remains stable, further demonstrating their robustness. In future work, we aim to explore this integration more comprehensively to further enhance GNN performance.


\begin{table*}
    \centering
    \caption{{\bf Link Prediction Performances.} AUC results for Link Prediction. Baselines are reported from~\citep{fu2023variational,zhou2022link,wu2024hierarchy}. In the Overall column, we report mean AUC results across all datasets. 
    \label{tab:link}}
    \resizebox{.9\linewidth}{!}{
\setlength\tabcolsep{4pt}
    \begin{tabular}{lccc||ccc|c}
        \toprule
        \textbf{Dataset} &\textbf{CORA} & \textbf{CITESEER} & \textbf{PUBMED} & \textbf{WISC.} &\textbf{CORNELL}& \textbf{TEXAS} & \textbf{Overall}\\
        \midrule
        Node2vec~\citep{grover2016node2vec} & 0.856{\scriptsize$\pm$0.015} & 0.894{\scriptsize$\pm$0.014} & 0.919{\scriptsize$\pm$0.004} & -- & -- & -- & --\\
        GAE~\citep{kipf2016variational} & 0.895{\scriptsize$\pm$0.165} & 0.887{\scriptsize$\pm$0.084} & 0.957{\scriptsize$\pm$0.012} & 0.689{\scriptsize$\pm$0.384} & 0.736{\scriptsize$\pm$1.090} & 0.753{\scriptsize$\pm$1.297} & 0.820\\
        VGAE~\citep{kipf2016variational} & 0.852{\scriptsize$\pm$0.493} & 0.810{\scriptsize$\pm$0.339} & 0.929{\scriptsize$\pm$0.134} & 0.669{\scriptsize$\pm$0.866} & 0.783{\scriptsize$\pm$0.401} & 0.767{\scriptsize$\pm$0.557} & 0.802\\
        ARVGE~\citep{pan2018adversarially} & 0.913{\scriptsize$\pm$0.079} & 0.878{\scriptsize$\pm$0.177} & 0.965{\scriptsize$\pm$0.015} & 0.711{\scriptsize$\pm$0.377} & \textcolor{blue}{\underline{0.789{\scriptsize$\pm$0.501}}} & 0.765{\scriptsize$\pm$0.468} &0.837 \\
        DGI~\citep{velivckovic2018deep} & 0.898{\scriptsize$\pm$0.080} & 0.955{\scriptsize$\pm$0.100} & 0.912{\scriptsize$\pm$0.060} & -- & -- & -- & --\\
        G-VAE~\citep{grover2019graphite} & 0.947{\scriptsize$\pm$0.011} & 0.973{\scriptsize$\pm$0.006} & 0.974{\scriptsize$\pm$0.004} & -- & -- & -- & -- \\
        GNAE~\citep{ahn2021variational}& 0.941{\scriptsize$\pm$0.063} & 0.969{\scriptsize$\pm$0.022} & 0.954{\scriptsize$\pm$0.019} & 0.782{\scriptsize$\pm$0.829} & 0.729{\scriptsize$\pm$1.083} & 0.751{\scriptsize$\pm$1.067} & 0.854\\
        VGNAE~\citep{ahn2021variational} & 0.892{\scriptsize$\pm$0.067} & 0.955{\scriptsize$\pm$0.055} & 0.897{\scriptsize$\pm$0.040} & 0.703{\scriptsize$\pm$0.120} & 0.733{\scriptsize$\pm$0.573} & 0.789{\scriptsize$\pm$0.302} & 0.828\\
        GIC~\citep{mavromatis2021graph} & 0.935{\scriptsize$\pm$0.060} & 0.970{\scriptsize$\pm$0.050} & 0.937{\scriptsize$\pm$0.030} & -- & -- & -- & -- \\
         LINKX~\citep{lim2021large} & 0.934\scriptsize{$\pm$0.030} & 0.935\scriptsize{$\pm$0.050} & -- & 0.801\scriptsize{$\pm$0.380} & -- & 0.758\scriptsize{$\pm$0.470} & 0.857\\
        DisenLink~\citep{zhou2022link} &   \textcolor{blue}{\bf 0.971\scriptsize{$\pm$0.040}} &  \textcolor{blue}{\bf 0.983\scriptsize{$\pm$0.030}} & -- & \textcolor{blue}{\underline{0.844\scriptsize{$\pm$0.190}}} & -- & 0.807\scriptsize{$\pm$0.400} &\textcolor{blue}{\underline{0.901}} \\
        DGAE~\citep{fu2023variational} & 0.958{\scriptsize$\pm$0.044} & 0.972{\scriptsize$\pm$0.034} & \textcolor{blue}{\underline{0.978{\scriptsize$\pm$0.012}}} & 0.757{\scriptsize$\pm$0.586} & 0.681{\scriptsize$\pm$1.207} & 0.683{\scriptsize$\pm$1.279} & 0.838\\
        VDGAE~\citep{fu2023variational} & 0.959{\scriptsize$\pm$0.042} & \textcolor{blue}{\underline{0.978{\scriptsize$\pm$0.030}}} & 0.970{\scriptsize$\pm$0.012} &  0.850{\scriptsize$\pm$0.478} & 0.761{\scriptsize$\pm$0.475} & \textcolor{blue}{\underline{0.813{\scriptsize$\pm$0.849}}} & 0.889\\
        HAGNN~\citep{wu2024hierarchy} & 0.936{\scriptsize$\pm$0.005} & 0.924{\scriptsize$\pm$0.008} & 0.967{\scriptsize$\pm$0.003} & \textcolor{blue}{\bf0.858{\scriptsize$\pm$0.048}} & 0.770{\scriptsize$\pm$0.036} & 0.795{\scriptsize$\pm$0.052} & 0.875\\
        \midrule
        \textbf{SCNode }& \textcolor{blue}{\underline{0.967{\scriptsize$\pm$0.047}}} & 0.952{\scriptsize$\pm$0.059} & \textcolor{blue}{\bf 0.980{\scriptsize$\pm$0.012}} & 0.796{\scriptsize$\pm$0.338} & \textcolor{blue}{\bf 0.814{\scriptsize$\pm$0.413}} & \textcolor{blue}{\bf 0.831{\scriptsize$\pm$0.393}} & \textcolor{blue}{\bf 0.905}\\
        \bottomrule
    \end{tabular}}
\end{table*}

\begin{figure}[t]
    \centering
    \subfigure[]{ 
        \includegraphics[width=0.3\linewidth]{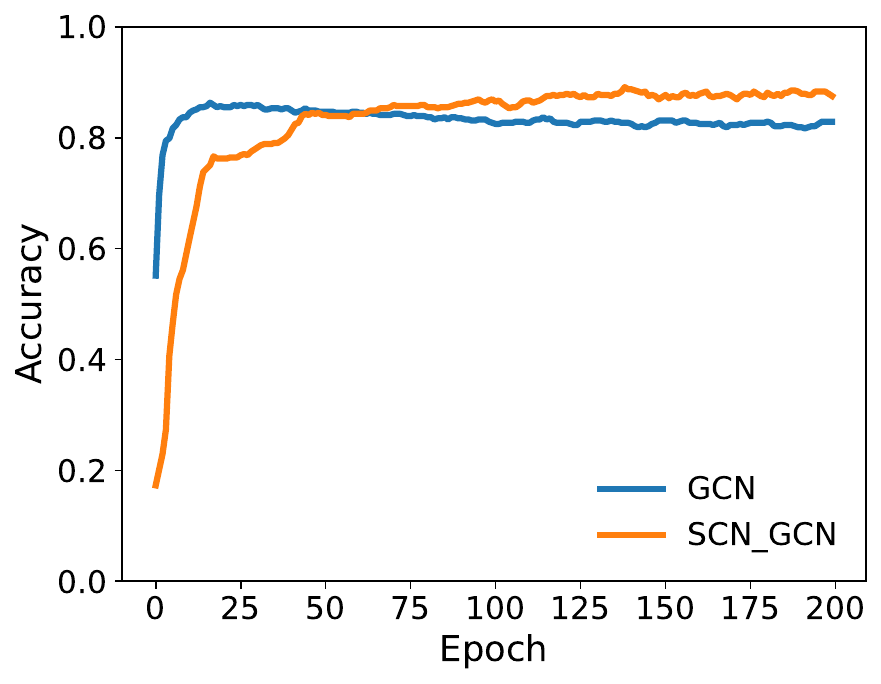}
    }
    \subfigure[]{ 
        \includegraphics[width=0.3\linewidth]{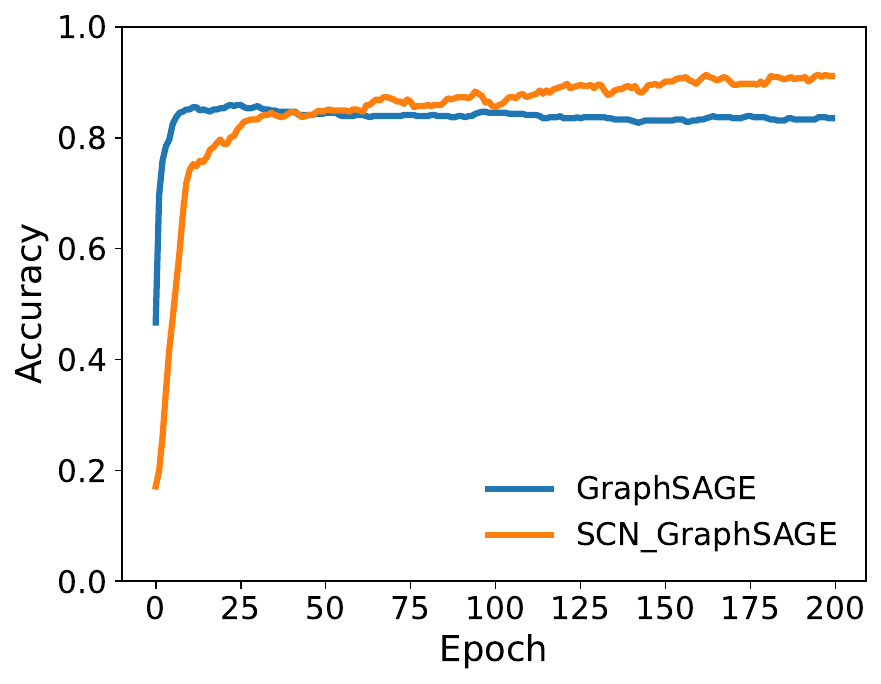}
    }
    \subfigure[]{ 
        \includegraphics[width=0.3\linewidth]{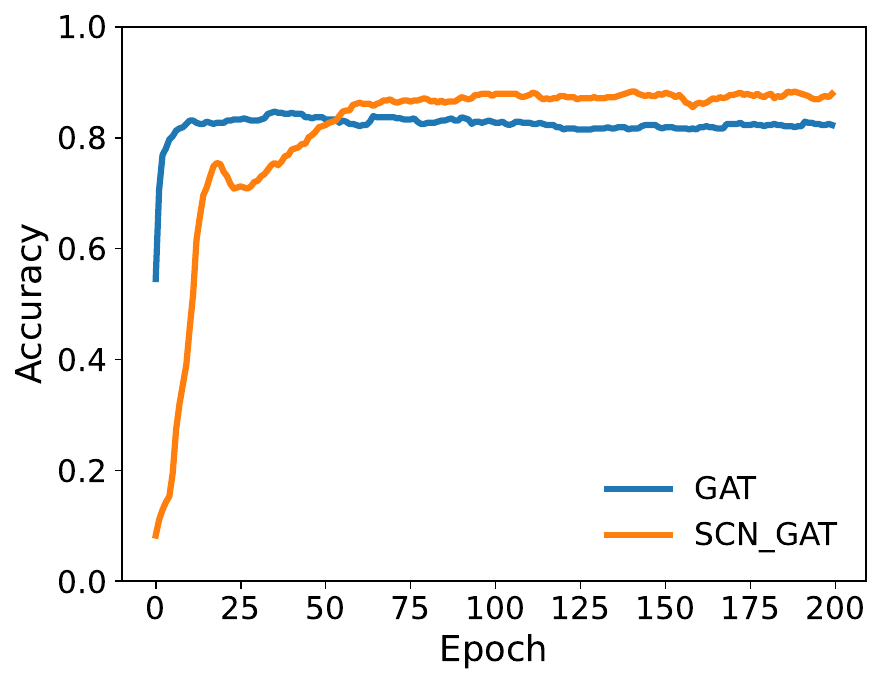}
    }
    \caption{
        \footnotesize Performance comparison of three GNN models (GCN, GSAGE, GAT) starting with the original node embeddings (blue) and SCNode node embeddings (orange) on the CORA dataset.
    }
    \label{fig:CCGNN}
    \vspace{-.2in}
\end{figure}

\begin{figure}[h]
    \centering
    \subfigure[]{ 
        \includegraphics[width=0.3\linewidth]{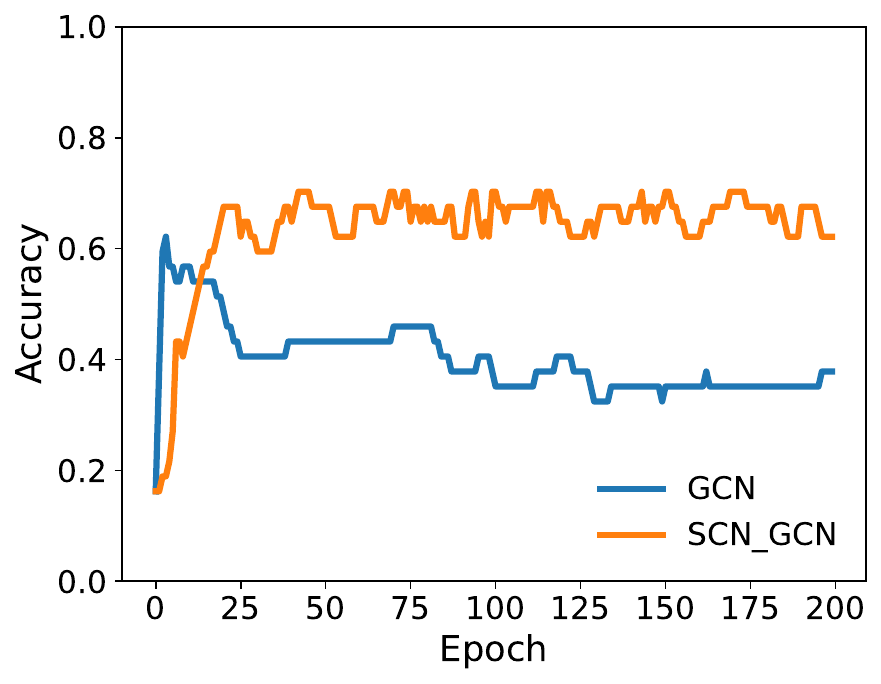}
    }
    \subfigure[]{ 
        \includegraphics[width=0.3\linewidth]{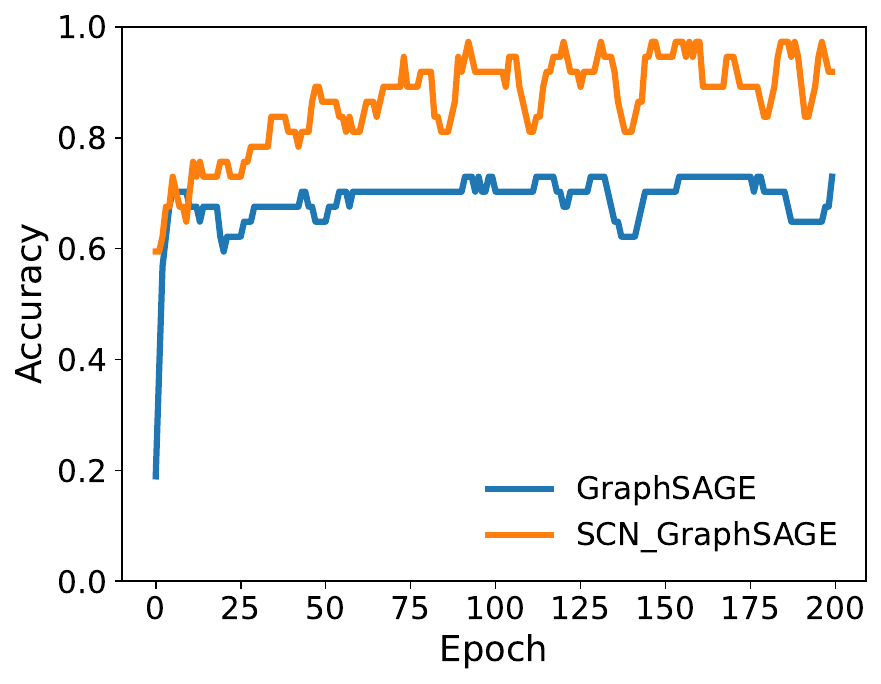}
    }
    \subfigure[]{ 
        \includegraphics[width=0.3\linewidth]{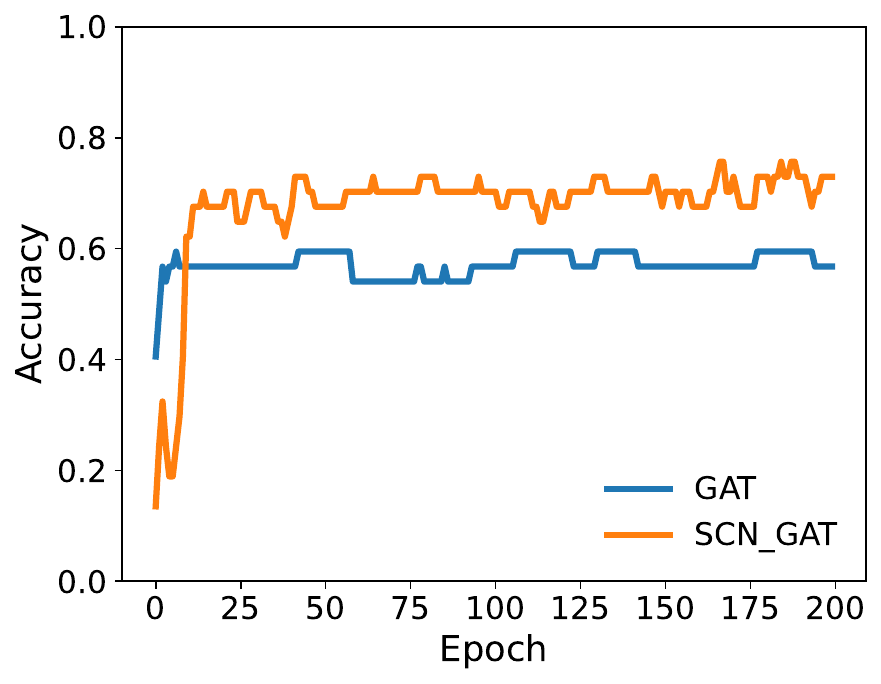}
    }
    \caption{
        \footnotesize Performance comparison of three GNN models (GCN, GraphSAGE, GAT) starting with the original node embeddings (blue) and SCNode node embeddings (orange) on the TEXAS dataset.
    }
    \label{fig:CCGNN2}
    \vspace{-.1in}
\end{figure}

\subsection{Link Prediction Results}

In this part, we show the utilization of SCNode Embeddings in the link prediction task. We utilize the common setting described in~\citep{zhou2022link} where the datasets were partitioned into training, validation, and test sets with a ratio of $85\%, 5\%,$ and $10\%$, respectively.

The model architecture for the link prediction problem consists of three MLP layers. In this framework, for each considered node pair $(u,v)$, the normalized SCNode encodings are term-wise multiplied, $h_u \cdot h_v$, and feed into MLP. We configure the model with a hidden neuron dimension of 16, a learning rate of 0.001, and train it over 100 epochs with a batch size of 128.

~\Cref{tab:link} reports the prediction results. We report the baselines from~\citep{fu2023variational} and \citep{zhou2022link}, which use our experiment settings. Out of three homophilic and three heterophilic datasets, SCNode outperforms existing models in three datasets and gets the second best result in one. \textbf{On the six datasets, SCNode reaches the highest mean AUC of 0.905.} 

\paragraph{Ablation Study.}  
To evaluate the efficacy of our feature vectors, we conducted ablation studies for the node classification task. We employed three submodels: utilizing  (1) spatial embeddings, (2) contectual embeddings, and (3) both (SCNode embeddings). As given in \Cref{tab:ablation}, we observe that in the CORA, CHAMELEON,   and PUBMED datasets, using only spatial or domain feature embeddings individually yields satisfactory performance. However, their combination significantly enhances performance in most cases. CHAMELEON ($+5.27$)  experiences a significant increase in accuracy in the combined setting. 
We note that when the ablation study has a large accuracy gap between the spatial and domain-only models for a dataset (e.g., in TEXAS), the accuracies of the SOTA models in Table~\ref{tab:maincomparison} show huge accuracy deviations for the dataset as well (e.g., TEXAS accuracies range from $54.32$ to $90.30$). A possible explanation is that models might individually capture either spatial or contextual information. Hence, they may be unable to combine these two sets of features to counterbalance the insufficient information present in one of them, leading to diminished accuracy scores. In contrast, the ablation study offers evidence that the SCNode approach is resilient to this limitation and experiences accuracy gain (e.g., $91.35 \rightarrow 94.59$ for TEXAS in Table~\ref{tab:ablation}). \textcolor{black}{We present further ablation studies on the effect of neighborhood size choice on spatial embeddings, and similarity metric choice on contextual embeddings in \Cref{app:ablation}.  }

\begin{table*}[h!]
    \caption{{\bf Spatial vs. Contextual.} Accuracy results of our model considering different feature subsets. 
    \label{tab:ablation}}
    \centering
    \resizebox{.9\linewidth}{!}{
        \setlength\tabcolsep{4pt}
        \begin{tabular}{lccc||cccccc}
            \toprule 
            \textbf{Features} & \textbf{CORA} & \textbf{CITESEER} & \textbf{PUBMED}  & \textbf{TEXAS}&\textbf{CORNELL} & \textbf{WISC.} & \textbf{CHAM.} \\
            \midrule
Spatial Only & 84.61$\pm$\scriptsize{1.20}&  71.82$\pm$\scriptsize{1.95}&  85.86$\pm$\scriptsize{0.32}& 67.29$\pm$\scriptsize{6.29} &  49.45$\pm$\scriptsize{4.23}& 57.84$\pm$\scriptsize{6.14}&  63.70$\pm$\scriptsize{3.05} \\
Context Only & 73.98$\pm$\scriptsize{2.56}&  67.42$\pm$\scriptsize{0.85}&  89.70$\pm$\scriptsize{0.50}& 91.35$\pm$\scriptsize{4.37}&  \textcolor{black}{\textbf{92.71$\pm$\scriptsize{4.23}}}&  \textcolor{black}{\textbf{94.71$\pm$\scriptsize{2.62}}}&  78.81$\pm$\scriptsize{1.33}\\
Both (SCNode) & \textcolor{black}{\textbf{88.65$\pm$\scriptsize{1.25}}}&\textcolor{black}{\textbf{77.83$\pm$\scriptsize{1.60}}}&\textcolor{black}{\textbf{90.53$\pm$\scriptsize{0.61}}}&\textcolor{black}{\textbf{94.59 $\pm$\scriptsize{5.25}}}&88.09$\pm$\scriptsize 1.91&92.01$\pm$\scriptsize{4.06}&\textcolor{black}{\textbf{84.08$\pm$\scriptsize{1.55}}}\\
            \bottomrule
        \end{tabular}    }
        \vspace{-.1in}
\end{table*}

\paragraph{Effectiveness in smaller training Settings.} 
\textcolor{black}{Our model leverages both spatial and contextual encodings, where contextual embeddings are derived from class cluster centroids as landmarks, and spatial encodings capture class distributions within k-hop neighborhoods. For these embeddings to be effective, two conditions are particularly important: reliable landmarks and a balanced class distribution. To evaluate the robustness of SCNode under limited supervision, we fix $20\%$ of the nodes as the test set and vary the size of the training set from $5\%$ to $75\%$ across three benchmark datasets: Cora, Citeseer, and PubMed (see~\cref{fig:trainVSacc}). The results show that SCNode achieves stable performance across all datasets once at least $50\%$ of the nodes are included in the training set. It is also worth noting that on PubMed, our model performs noticeably well even under low training ratios. This is because, despite only using $5\%$ of nodes for training, the dataset still provides nearly 1,000 labeled nodes across only three classes. This ensures a sufficient number of representatives per class, enabling SCNode to learn effective contextual embeddings. Consequently, SCNode produces stronger node representations when enough class representatives are available, making the embeddings more effective for supervised learning.}
\begin{wrapfigure}[12]{r}{0.4\textwidth}
  \begin{center}
     \includegraphics[width=0.9\linewidth]{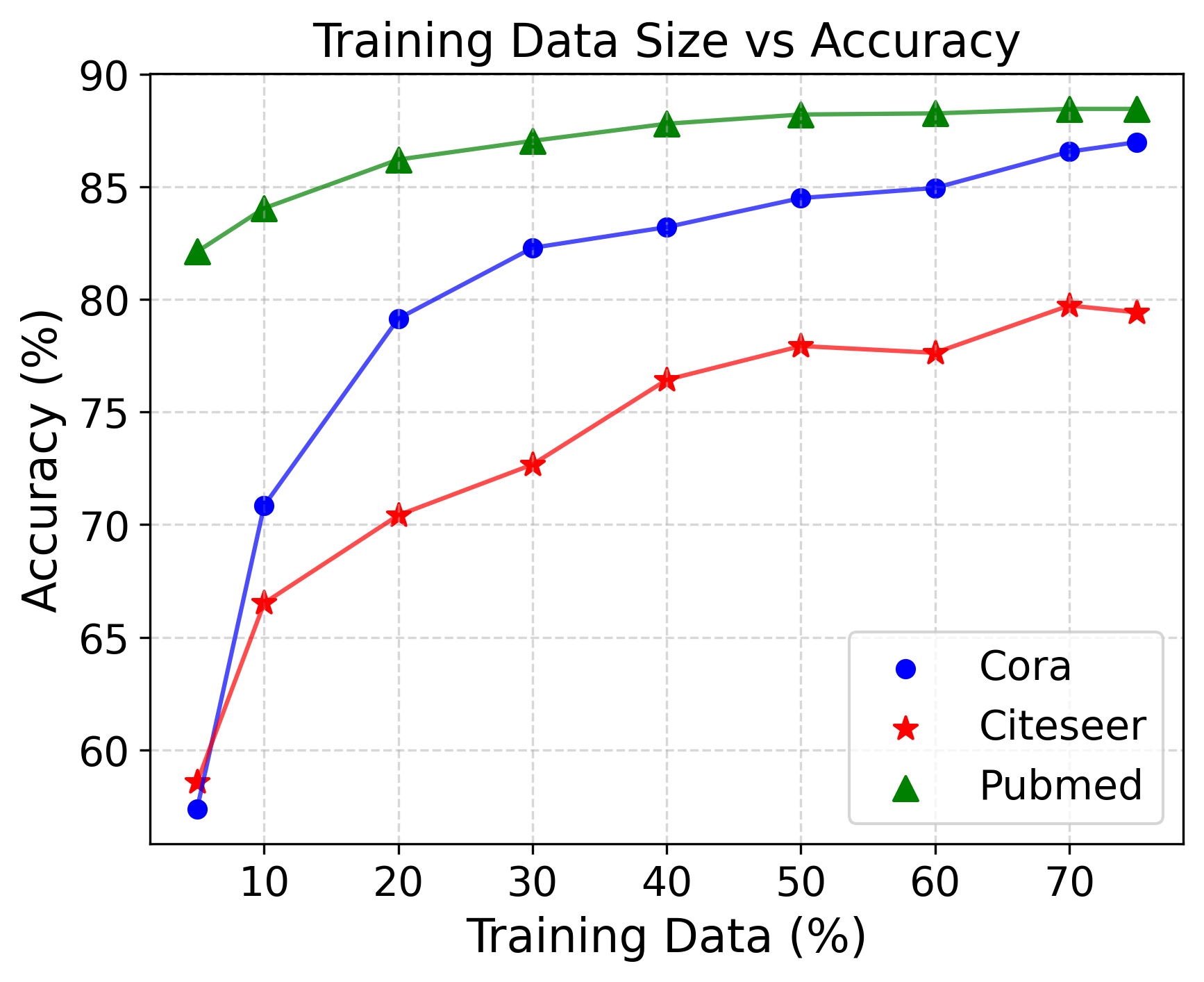}
      \caption{\textbf{Training set vs Accuracy}}
    \label{fig:trainVSacc}
      \end{center}
\end{wrapfigure}

\paragraph{Limitations.} A limitation of SCNode lies in its reliance on the quality and relevance of the landmarks used to define the contextual embeddings. If these landmarks fail to adequately represent the underlying data distributions, the model's performance may suffer. However, this issue could be mitigated by leveraging multiple landmarks or employing contrastive learning in an unsupervised setting to optimize landmark selection. We aim to address this limitation in future work.


\section{Conclusion}

In this work, we introduced SCNode, a novel method that integrates spatial and contextual information from graphs. Our results show that SCNode overcomes the predictive limitations of relying solely on either spatial or contextual features, achieving significant performance gains when both are informative. The model is computationally efficient and consistently outperforms or matches SOTA GNNs across diverse graph tasks, including small and large graphs as well as homophilic and heterophilic settings. Furthermore, the plug-and-play design of SCNode vectors highlights their flexibility, providing a powerful enhancement to existing GNN architectures. In future work, we plan to refine SCNode for deeper integration with GNNs and expand its application to temporal graphs by incorporating temporal dynamics alongside node attributes.

\subsubsection*{Broader Impact Statement}
Our SCNode framework advances graph representation learning by addressing the fundamental limitations of GNNs, particularly their struggles with heterophilic graphs, underreaching, and oversquashing. By integrating spatial and contextual information through a novel landmark-based relative positioning approach, SCNode enhances the adaptability of GNNs across diverse datasets. This breakthrough enables more accurate node classification and link prediction, improving applications in bioinformatics, social networks, and financial modeling. Additionally, SCNode embeddings serve as plug-and-play features, significantly boosting existing GNN models' performance, thereby bridging the gap between homophilic and heterophilic settings. This research contributes to the broader goal of more generalizable and interpretable graph learning models, expanding their impact across scientific and industrial domains.


\subsubsection*{Acknowledgments}
This research was partially supported by the National Science Foundation under grants DMS-2220613 and DMS-2229417.  The authors acknowledge the \href{http://www.tacc.utexas.edu}{Texas Advanced Computing Center} (TACC) at UT Austin for providing computational resources that have contributed to this paper.

\bibliography{reference}
\bibliographystyle{tmlr}

\clearpage

\appendix
\centerline{\bf \Large Appendix}






\section{Homophily and SCNode} \label{sec:homophily}

\subsection{Recent Homophily Metrics} \label{sec:heterophily_metrics}

Until recently, the prevailing homophily metrics were node homophily~\cite{pei2020geom} and edge homophily~\cite{abu2019mixhop,zhu2020beyond}. Node homophily simply computes, for each node, the proportion of its neighbors that belong to the same class, and averages across all nodes, while edge homophily measures the proportion of edges connecting nodes of the same class compared to all edges in the network. 
In the past few years, to study heterophily phenomena in graph representation learning, several new homophily metrics were introduced, e.g., class homophily \cite{lim2021new}, generalized edge homophily \cite{jin2022raw} and aggregation homophily \cite{luan2022revisiting}, adjusted homophily \cite{platonov2023characterizing} and label informativeness \cite{platonov2023characterizing}. In \Cref{tab:heterophily_metrics_1}, we give these metrics for our datasets. The details of these metrics can be found in~\cite{luan2023graph}.

\begin{table}[b]
  \centering
  \caption{Homophily metrics for our datasets.
  \label{tab:heterophily_metrics_1}}
  \resizebox{.8\linewidth}{!}{
    \begin{tabular}{lccc||cccc}
    \toprule
    \textbf{Metric} & \textbf{CORA} & \textbf{CITES.} & \textbf{PUBMED} & \textbf{TEXAS} & \textbf{CORNELL} & \textbf{WISC.} & \textbf{CHAM.}  \\
    \midrule
    $\mathcal{H}_{SCN}$-spat-1 & 0.8129 & 0.6861 & 0.7766 & 0.1079 & 0.1844 & 0.2125 & 0.2549  \\
    $\mathcal{H}_{SCN}$-context & 0.1702 & 0.1949 & 0.3245 & 0.2352 & 0.2409 & 0.2449 & 0.2564  \\
    \midrule
    $\mathcal{H}_{\text{node}}$ & 0.8252 & 0.7175 & 0.7924 & 0.3855 & 0.1498 & 0.0968 & 0.2470  \\
    $\mathcal{H}_{\text{edge}}$ & 0.8100 & 0.7362 & 0.8024 & 0.5669 & 0.4480 & 0.4106 & 0.2795  \\
    $\mathcal{H}_{\text{class}}$ & 0.7657 & 0.6270 & 0.6641 & 0.0468 & 0.0941 & 0.0013 & 0.0620  \\
    $\mathcal{H}_{\text{agg}}$ & 0.9904 & 0.9826 & 0.9432 & 0.8032 & 0.7768 & 0.6940 & 0.6100  \\
    $\mathcal{H}_{\text{GE}}$ & 0.1700 & 0.1900 & 0.2700 & 0.3100 & 0.3400 & 0.3500 & 0.0152  \\
    $\mathcal{H}_{\text{adj}}$ & 0.8178 & 0.7588 & 0.7431 & 0.1889 & 0.0826 & 0.0258 & 0.0663 \\
    LI & 0.5904 & 0.4508 & 0.4093 & 0.0169 & 0.1311 & 0.1923 & 0.0480  \\   
    \bottomrule
    \end{tabular}
  }
\end{table}

\begin{table*}[t]
    \centering
    \caption{ Homophily matrices for CORA dataset. \textbf{Every row} represents the corresponding homophily ratio of the row's class.  In every row, the highest homophily ratio is marked \textbf{bold}.   \label{tab:cora_hom_matrix}}
    \resizebox{\linewidth}{!}{
\begin{tabular}{cccccccc}
\multicolumn{8}{c}{\large \textbf{SCNode Spatial-1 Homophily Matrix}}\\
\toprule
&$\Cbold_1$&$\Cbold_2$&$\Cbold_3$&$\Cbold_4$&$\Cbold_5$&$\Cbold_6$&$\Cbold_7$\\
\midrule
$\Cbold_1$&\textbf{0.743} & 0.029 & 0.014 & 0.083 & 0.050 & 0.037 & 0.043 \\
$\Cbold_2$&0.040 & \textbf{0.769} & 0.062 & 0.080 & 0.020 & 0.028 & 0.002 \\
$\Cbold_3$&0.010 & 0.025 & \textbf{0.917} & 0.032 & 0.001 & 0.014 & 0.001 \\
$\Cbold_4$&0.055 & 0.020 & 0.016 & \textbf{0.839} & 0.051 & 0.015 & 0.004 \\
$\Cbold_5$&0.058 & 0.014 & 0.002 & 0.064 & \textbf{0.849} & 0.011 & 0.003 \\
$\Cbold_6$&0.058 & 0.017 & 0.030 & 0.051 & 0.018 & \textbf{0.786} & 0.040 \\
$\Cbold_7$&0.113 & 0.001 & 0.003 & 0.022 & 0.006 & 0.067 & \textbf{0.788 }\\
\bottomrule
\end{tabular}
\quad
\begin{tabular}{cccccccc}
\multicolumn{8}{c}{\large \textbf{SCNode Context-S Homophily Matrix}}\\
\toprule
&$\Cbold_1$&$\Cbold_2$&$\Cbold_3$&$\Cbold_4$&$\Cbold_5$&$\Cbold_6$&$\Cbold_7$\\
\midrule
$\Cbold_1$&\textbf{0.206} & 0.143 & 0.126 & 0.111 & 0.124 & 0.143 & 0.147 \\
$\Cbold_2$&0.123 & \textbf{0.236} & 0.146 & 0.121 & 0.101 & 0.147 & 0.126 \\
$\Cbold_3$&0.109 & 0.171 & \textbf{0.253} & 0.107 & 0.088 & 0.136 & 0.137 \\
$\Cbold_4$&0.122 & {0.170} & 0.168 & \textbf{0.159} & 0.130 & 0.134 & 0.118 \\
$\Cbold_5$&0.130 & 0.154 & 0.118 & 0.110 & \textbf{0.240} & 0.137 & 0.111 \\
$\Cbold_6$&0.112 & 0.148 & 0.135 & 0.103 & 0.078 &\textbf{ 0.283} & 0.142 \\
$\Cbold_7$&0.150 & 0.146 & 0.123 & 0.093 & 0.097 & 0.154 & \textbf{0.236} \\
\bottomrule
\end{tabular}
\quad
\begin{tabular}{cccccccc}
\multicolumn{8}{c}{\large \textbf{SCNode Context-I Homophily Matrix}}\\
\toprule
&$\Cbold_1$&$\Cbold_2$&$\Cbold_3$&$\Cbold_4$&$\Cbold_5$&$\Cbold_6$&$\Cbold_7$\\
\midrule
$\Cbold_1$&\textbf{0.159} & 0.134 & 0.142 & 0.154 & 0.143 & 0.136 & 0.132 \\
$\Cbold_2$&0.142 & \textbf{0.159} & 0.146 & 0.154 & 0.139 & 0.138 & 0.122 \\
$\Cbold_3$&0.136 & 0.140 & \textbf{0.162} & 0.156 & 0.140 & 0.138 & 0.127 \\
$\Cbold_4$&0.145 & 0.136 & 0.146 & \textbf{0.168} & 0.151 & 0.133 & 0.122 \\
$\Cbold_5$&0.144 & 0.131 & 0.144 & 0.159 & \textbf{0.167} & 0.133 & 0.124 \\
$\Cbold_6$&0.142 & 0.132 & 0.145 & 0.149 & 0.145 & \textbf{0.158} & 0.129 \\
$\Cbold_7$&0.142 & 0.126 & 0.142 & 0.152 & 0.141 & 0.138 & \textbf{0.159} \\
\bottomrule
\end{tabular}}
\end{table*}

\begin{table*}[ht!]
    \centering
    \caption{Homophily matrices for the WISCONSIN dataset. \textbf{Every row} represents the corresponding homophily ratio of the row's class. In every row, the highest homophily ratio is marked \textbf{bold}. \label{tab:wis_hom_matrix}}
    \vspace{.1cm}
        \resizebox{\linewidth}{!}{
\begin{tabular}{cccccc}
\multicolumn{6}{c}{\textbf{SCNode Spatial-1 Homophily Matrix}}\\
\toprule
&$\Cbold_1$&$\Cbold_2$&$\Cbold_3$&$\Cbold_4$&$\Cbold_5$\\
\midrule
$\Cbold_1$&\textbf{0.323} & 0.000 & 0.000 & 0.323 & 0.323 \\
$\Cbold_2$&0.001 & 0.219 & \textbf{0.595} & 0.185 & 0.001 \\
$\Cbold_3$&0.000 & \textbf{0.547} & 0.225 & 0.145 & 0.084 \\
$\Cbold_4$&0.044 & 0.268 & \textbf{0.468} & 0.071 & 0.150 \\
$\Cbold_5$&0.149 & 0.039 & \textbf{0.397} & 0.200 & 0.215 \\
\bottomrule
\end{tabular}
\quad
\begin{tabular}{cccccc}
\multicolumn{6}{c}{\textbf{SCNode Spatial-2 Homophily Matrix}}\\
\toprule
&$\Cbold_1$&$\Cbold_2$&$\Cbold_3$&$\Cbold_4$&$\Cbold_5$\\
\midrule
$\Cbold_1$&\textbf{0.444} & 0.278 & 0.111 & 0.111 & 0.056 \\
$\Cbold_2$&0.001 & \textbf{0.439} & 0.438 & 0.106 & 0.016 \\
$\Cbold_3$&0.011 & 0.351 & \textbf{0.481} & 0.111 & 0.046 \\
$\Cbold_4$&0.029 & 0.238 & \textbf{0.485} & 0.110 & 0.138 \\
$\Cbold_5$&0.149 & 0.084 & \textbf{0.374} & 0.136 & 0.258 \\
\bottomrule
\end{tabular} \quad
\begin{tabular}{cccccc}
\multicolumn{6}{c}{\textbf{SCNode Domain Homophily Matrix}}\\
\toprule
&$\Cbold_1$&$\Cbold_2$&$\Cbold_3$&$\Cbold_4$&$\Cbold_5$\\
\midrule
$\Cbold_1$&\textbf{0.221} & 0.198 & 0.207 & 0.192 & 0.181 \\
$\Cbold_2$&0.148 & \textbf{0.247} & 0.223 & 0.201 & 0.181 \\
$\Cbold_3$&0.174 & 0.205 & \textbf{0.228} & 0.204 & 0.189 \\
$\Cbold_4$&0.145 & 0.203 & 0.217 & \textbf{0.237} & 0.199 \\
$\Cbold_5$&0.149 & 0.204 & 0.209 & 0.209 & \textbf{0.230} \\
\bottomrule
\end{tabular}}
\vspace{-.1in}
\end{table*}

\begin{align*}
    &\mathcal{H}_{\text{node}}(\mathcal{G})== \frac{1}{|\mathcal{V}|} \sum_{v \in \mathcal{V}}  \h_{\text{node}}^v= \frac{1}{|\mathcal{V}|} \sum_{v \in \mathcal{V}}    \frac{\big|\{u \mid u \in \mathcal{N}_v, Z_{u,:}=Z_{v,:}\}\big|}{d_v} \\
    & \mathcal{H}_{\text{edge}}(\mathcal{G}) = \frac{\big|\{e_{uv} \mid e_{uv}\in \mathcal{E}, Z_{u,:}=Z_{v,:}\}\big|}{|\mathcal{E}|}\\
    &\mathcal{H}_\text{class}(\mathcal{G}) \!=\! \frac{1}{C\!-\!1} \sum_{k=1}^{C}\bigg[h_{k}    \!-\! \frac{\big|\{v \mid Z_{v,k} \!=\! 1 \}\big|}{N}\bigg]_{+}
     \text{where } h_{k}\! =\! \frac{\sum_{v \in \mathcal{V}, Z_{v,k} = 1 } \big|\{u \mid  u \in \mathcal{N}_v, Z_{u,:}\!=\!Z_{v,:}\}\big| }{\sum_{v \in \{v|Z_{v,k}=1\}} d_{v}}\\
    &\mathcal{H}_\text{GE} (\mathcal{G}) = \frac{\sum\limits_{(i,j) \in \mathcal{E}}\text{cos}(x_i,x_j)}{|\mathcal{E}|}\\
    &\mathcal{H}_{\text{agg}}(\mathcal{G}) =  \frac{1}{\left| \mathcal{V} \right|} \times \left| \left\{v   \,\big| \, \mathrm{Mean}_u  \big( \{S(\hat{A},Z)_{v,u}^{Z_{u,:}=Z_{v,:}} \}\big) \geq  \mathrm{Mean}_u\big(\{S(\hat{A},Z)_{v,u}^{Z_{u,:} \neq Z_{v,:}}  \} \big) \right\} \right|\\
    &\mathcal{H}_\text{adj}=\frac{\mathcal{H}_{\text{edge}} - \sum_{c=1}^C \bar{p}_c^2}{1-\sum_{k=1}^C \bar{p}_c^2}\\
    &\text{LI} = -\frac{\sum_{c_1, c_2} p_{c_1, c_2 } \log \frac{p_{c_1, c_2}}{\bar{p}_{c_1} \bar{p}_{c_2}}}{\sum_c \bar{p}_c \log \bar{p}_c} = 2-\frac{\sum_{c_1, c_2} p_{c_1, c_2} \log p_{c_1, c_2}}{\sum_c \bar{p}_c \log \bar{p}_c}
\end{align*}

where $\mathcal{H}_\text{node}^v$ is the local homophily value for node $v$; $[a]_{+}=\max (0, a)$; $h_{k}$ is the class-wise homophily metric \cite{lim2021new};  $\mathrm{Mean}_u\left(\{\cdot\}\right)$ takes the average over $u$ of a given multiset of values or variables and $S(\hat{A},Z)=\hat{A}Z(\hat{A}Z)^\top$ is the post-aggregation node similarity matrix; $D_c=\sum_{v: z_v=c} d_v, \bar{p}_c = \frac{D_c}{2|\mathcal{E}|}$, $p_{c_1, c_2} = \sum_{(u, v) \in \mathcal{E}}  \frac{1 \{z_u=c_1, z_v=c_2 \}}{2|\mathcal{E}|}, c, c_1, c_2 \in \{1,\dots,C\}$.


In \Cref{tab:cora_hom_matrix} below, we give SCNode Homophily matrices which provides detailed insights on the class interactions in CORA dataset. In particular, Spatial-1 matrix interprets as for any class, the nodes likely to form a link with same class node in their one neighborhood. In the following Contextual-Selective and Contextual-Inclusive Homophily matrices, independent of graph distance, we see the positions of attribute vectors similarity to the chosen class landmarks.



\subsection{SCNode Homophily Matrices}

We provide further homophily matrices for one homophilic and one heterophilic graphs. In \Cref{tab:cora_hom_matrix}, for CORA dataset, we observe in SCN Spatial-1, Context-S and Context-I Homophily matrices show the strong homophily behavior in both spatial and contextual aspects. In Spatial-1 matrix the entry $h_{ij}$ represents how likely the nodes in $\C_i$ to connect to nodes in $\C_j$ among their $1$-neighborhood. The very high numbers in the diagonal shows that most classes likely to connect with their own class, as CORA's node homophily ratio (0.83) suggest. From the matrix, we have finer information that the class $\C_3$ is very highly homophilic. The domain matrices represents the average similarity/closeness to class landmarks. We observe that in both domain matrices, the node feature vectors likely to land close to their own class landmark, and SCNode vectors captures this crucial information.

In \Cref{tab:wis_hom_matrix}, for WISCONSIN dataset, spatial-1 and Spatial-2 matrices show the irregular behavior as the WISCONSIN node homophily ratio (0.09) suggest. The nodes are unlikely to connect their adjacent nodes in the same class. Even in their two neighborhoods, they don't have many of their fellow classmate for $\C_4$ and $\C_5$. However, while the other three classes are not connecting their own classmate, they have several common neighbors. Finally, the domain homophily matrix again shows that while they are not very close in the graph, they all share common interests, as every node's feature vector lands close to their classmates'.

While different homophily ratios are summarizing crucial information about node tendencies, we observe that our matrices are giving much finer and easily intepretable information no class behaviors.

\subsection{Proof of the Theorem and New Homophily Metrics}

In \Cref{sec:hom}, we have defined homophily and discussed the relation between SCNode vectors and the homophily notion. Here we give details how to generalize this idea to give finer homophily notions for graphs. First, we give the proof of \Cref{thm:hom}.

\begin{theorem} For an undirected graph $\G=(\V,\E)$, let $\vec{\alpha}_1(v)$ be the spatial vector defined in \Cref{sec:homophily}. Let $\wh{\alpha}_1(v)$ be the vector where the entry corresponding to class of $v$ is set to $0$. Then, 
 $$1-\varphi(\G)=\frac{1}{|\V|}\sum_{v\in\V}\frac{\|\wh{\alpha}_1(v)\|_1}{\|\vec{\alpha}_1(v)\|_1}$$   
\end{theorem}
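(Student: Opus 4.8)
The plan is to unwind the two $\ell_1$-norms on the right-hand side directly from the definition of the spatial-1 vector and to recognize that each summand is exactly $1-\eta(v)/\deg(v)$, the per-node complement of the node-homophily contribution; averaging then produces the claim with essentially no machinery.

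First I would fix a vertex $v$ and suppose it belongs to class $\mathcal{C}_i$. By the definition of $\vec{\alpha}_1(v)=[a_{11},\dots,a_{1N}]$, each coordinate $a_{1j}$ counts the neighbors of $v$ (the vertices at distance exactly $1$) lying in class $\mathcal{C}_j$. Since every such entry is nonnegative, $\|\vec{\alpha}_1(v)\|_1=\sum_j a_{1j}$ is simply the total number of neighbors of $v$, that is $\deg(v)$. In particular the coordinate indexed by $v$'s own class satisfies $a_{1i}=\eta(v)$, the number of neighbors sharing $v$'s label.

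Next I would analyze $\wh{\alpha}_1(v)$, obtained by zeroing the $i$-th coordinate, so that $\|\wh{\alpha}_1(v)\|_1=\sum_{j\neq i}a_{1j}=\deg(v)-\eta(v)$ counts the neighbors of $v$ in a different class. Dividing gives
$$\frac{\|\wh{\alpha}_1(v)\|_1}{\|\vec{\alpha}_1(v)\|_1}=\frac{\deg(v)-\eta(v)}{\deg(v)}=1-\frac{\eta(v)}{\deg(v)}.$$
Averaging this identity over all $v\in\V$ and using linearity, the constant term sums to $1$ while the remaining sum is precisely $\mathcal{H}_{\mbox{node}}(\G)$, which yields $1-\mathcal{H}_{\mbox{node}}(\G)=\frac{1}{|\V|}\sum_{v\in\V}\frac{\|\wh{\alpha}_1(v)\|_1}{\|\vec{\alpha}_1(v)\|_1}$.

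The computation itself is routine; the only points requiring care are bookkeeping conventions rather than genuine obstacles. First, one must confirm that the spatial vector tallies neighbors at distance exactly $1$ and does \emph{not} include $v$ itself, which is what makes $\|\vec{\alpha}_1(v)\|_1=\deg(v)$; the toy example in \Cref{sec:spatial} fixes this convention. Second, the identity uses the implicit assumption that every neighbor carries a known label, so that the counting set is effectively all of $\V$ rather than $\mathcal{V}_{\text{train}}$ and $\sum_j a_{1j}$ equals $\deg(v)$ exactly. Finally, to avoid division by zero one should adopt the same treatment of isolated vertices ($\deg(v)=0$) as the node-homophily metric does, either excluding them or defining their contribution to be zero.
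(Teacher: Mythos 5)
Your proposal is correct and follows essentially the same route as the paper's proof: identify $\|\vec{\alpha}_1(v)\|_1=\deg(v)$ and the zeroed coordinate as $\eta(v)$, deduce the per-node identity $\frac{\|\wh{\alpha}_1(v)\|_1}{\|\vec{\alpha}_1(v)\|_1}=1-\frac{\eta(v)}{\deg(v)}$, and average over $\V$. Your added remarks on conventions (distance-exactly-one counting, all labels known rather than restricting to $\mathcal{V}_{\text{train}}$, and isolated vertices) are sensible bookkeeping points that the paper leaves implicit.
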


\begin{proof} For an undirected graph $\G=(\V,\E)$ with node class assignment function $\C:\V\to\{1,2,\dots,N\}$, $\vec{\alpha}_1(v)=[ a_1(v) \ a_2(v) \ \dots \ a_N(v)]$ where $a_i(v)=\# \{u\in\N_1(v)\mid \C(u)=i\}$. Let $\C(v)=j_v$. Then, by setting $a_{j_v}(v)=0$, we get a new vector $\wh{\alpha}_1(v)=[ a_1(v) \dots\ a_{j_v-1}(v) \ 0 \ a_{j_v+1}(v) \ \dots]$

$L^1$-norm of $\vec{\alpha}_1(v)$ is $\|\vec{\alpha}_1(v)\|_1=\sum_{i=1}^N a_i(v)$. Similarly, $\|\wh{\alpha}_1(v)\|_1= \sum_{i\neq j_v} a_i(v)$. Hence, we have $\|\vec{\alpha}_1(v)\|_1-\|\wh{\alpha}_1(v)\|_1=a_{j_v}(v)$. Notice that by definition, $a_{j_v}(v)=\eta(v)$, the number of neighbors in the same class with $v$. Similarly, $\|\vec{\alpha}_1(v)\|_1=deg(v)$. Hence, 
\begin{equation} \label{eqn1}
1-\dfrac{\|\wh{\alpha}_1(v)\|_1}{\|\vec{\alpha}_1(v)\|_1}=\dfrac{\|\vec{\alpha}_1(v)\|_1-\|\wh{\alpha}_1(v)\|_1}{\|\vec{\alpha}_1(v)\|_1}=\dfrac{\eta(v)}{deg(v)}
\end{equation}
As $\varphi(\G)=\dfrac{1}{|\V|}\sum_{v\in\V}\dfrac{\eta(v)}{deg(v)}$, we have $1-\varphi(\G)$ as 

\resizebox{0.7\hsize}{!}{$1-\frac{1}{|\V|}\sum_{v\in\V}\frac{\eta(v)}{deg(v)}=\frac{1}{|\V|}\sum_{v\in\V}(1-\frac{\eta(v)}{deg(v)})=\frac{1}{|\V|}\sum_{v\in\V}\frac{\|\wh{\alpha}_1(v)\|_1}{\|\vec{\alpha}_1(v)\|_1}$}

where the last equality follows by \Cref{eqn1}. The proof follows.
\end{proof}

This perspective inspires different ways to generalize the homophily concept by using SCNode vectors. Notice that in SCNode matrices above, we employed a classwise grouping to measure class interactions. If we do not use any grouping for the nodes, we get natural generalizations of existing homophily ratios.  First, we define higher homophily by using the ratio of the number of nodes in the $2$-neighborhood $\N_2(v)$ with the same class to $|\N_2(v)|$.

\begin{definition} [Higher Homophily] \label{defn:hom2} Given $\G=(\V,\E)$ with $\C:\V\to\{1,2,\dots,N\}$ representing node classes. Let $\eta_2(v)$ be the number of nodes in $\N_2(v)$ in the same class with $v$. Then, homophily ratio of $\G$ is defined as 
$$\varphi_2(\G)=\frac{1}{|\V|}\sum_{v\in\V}\frac{\eta_2(v)}{|\N_2(v)|}$$    
\end{definition}

By applying similar ideas to the proof of \Cref{thm:hom}, we obtain the following result.

\begin{theorem} \label{thm:hom2} For an undirected graph $\G=(\V,\E)$, let $\vec{\alpha}_2(v)$  be the spatial vector defined above. Let $\wh{\alpha}_2(v)$ be the vector where the entry corresponding to the class of $v$ is set to $0$. Then, 
 $$1-\varphi_2(\G)=\frac{1}{|\V|}\sum_{v\in\V}\frac{\|\wh{\alpha}_2(v)\|_1}{\|\vec{\alpha}_2(v)\|_1}$$   
\end{theorem}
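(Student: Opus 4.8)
The plan is to mirror the proof of \Cref{thm:hom} essentially verbatim, replacing the $1$-hop neighborhood by the $2$-hop neighborhood and the degree by $|\N_2(v)|$. First I would unwind the definition of the spatial vector: write $\vec{\alpha}_2(v)=[a_1(v)\ a_2(v)\ \dots\ a_N(v)]$, where $a_i(v)=\#\{u\in\N_2(v)\mid \C(u)=i\}$ records how many nodes within two hops of $v$ lie in class $i$. Setting $j_v=\C(v)$, the modified vector $\wh{\alpha}_2(v)$ is obtained by zeroing the $j_v$-th coordinate, so that $\|\wh{\alpha}_2(v)\|_1=\sum_{i\neq j_v}a_i(v)$ while $\|\vec{\alpha}_2(v)\|_1=\sum_{i=1}^N a_i(v)$.

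The key step is to identify these two $L^1$-norms with the quantities appearing in \Cref{defn:hom2}. Because every node of $\N_2(v)$ belongs to exactly one class, the class counts $a_i(v)$ partition $\N_2(v)$, giving $\|\vec{\alpha}_2(v)\|_1=|\N_2(v)|$; and by definition the single removed coordinate is $a_{j_v}(v)=\eta_2(v)$, the number of same-class nodes within two hops. Subtracting yields $\|\vec{\alpha}_2(v)\|_1-\|\wh{\alpha}_2(v)\|_1=\eta_2(v)$, hence
\[
1-\frac{\|\wh{\alpha}_2(v)\|_1}{\|\vec{\alpha}_2(v)\|_1}=\frac{\eta_2(v)}{|\N_2(v)|}.
\]
Averaging this node-wise identity over all $v\in\V$ and comparing with the definition $\varphi_2(\G)=\frac{1}{|\V|}\sum_{v\in\V}\frac{\eta_2(v)}{|\N_2(v)|}$, exactly as in the final display of the proof of \Cref{thm:hom}, produces the claimed formula $1-\varphi_2(\G)=\frac{1}{|\V|}\sum_{v\in\V}\frac{\|\wh{\alpha}_2(v)\|_1}{\|\vec{\alpha}_2(v)\|_1}$.

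The only point requiring care — and the sole place where the argument differs from the $1$-hop case beyond notation — is the counting identity $\|\vec{\alpha}_2(v)\|_1=|\N_2(v)|$. This rests on two conventions that must be fixed consistently: that all nodes carry labels (so no node of $\N_2(v)$ is dropped from the class tally, unlike the $\V_{\text{train}}$-restricted definition in \Cref{sec:spatial}), and that the rule for whether $v$ itself belongs to $\N_2(v)$ is the same one used in \Cref{defn:hom2}. As long as the numerator $\eta_2(v)$ and denominator $|\N_2(v)|$ adopt the same convention, the per-node equality is exact and the averaging step is purely formal; I do not anticipate any genuine obstacle here, since the identity is combinatorial rather than analytic.
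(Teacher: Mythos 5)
Your proposal is correct and follows exactly the route the paper intends: the paper gives no separate argument for \Cref{thm:hom2}, stating only that it follows ``by applying similar ideas to the proof of \Cref{thm:hom},'' and your verbatim adaptation --- replacing $deg(v)$ by $|\N_2(v)|$ and $\eta(v)$ by $\eta_2(v)$, with the counting identity $\|\vec{\alpha}_2(v)\|_1=|\N_2(v)|$ --- is precisely that adaptation. Your added remark about fixing the labeling and self-membership conventions consistently between $\eta_2(v)$ and $|\N_2(v)|$ is a sensible precaution the paper leaves implicit, but it does not change the argument.
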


While the above notions represent structural homophily, we introduce another homophily by comparing the contextual vectors of neighboring nodes with the central node's.

\begin{definition} [Contextual Homophily] \label{defn:domainhom} Given $\G=(\V,\E)$ with $\C:\V\to\{1,2,\dots,N\}$ representing node classes. Let $\vec{\beta}(v)$ be the contextual vector as defined in \Cref{sec:domain}.  Let $\wh{\beta}(v)$ be the vector where the entry corresponding to the class of $v$ is set to $0$.Then, define the contextual homophily of $\G$ as 
$$\varphi_\mathcal{D}(\G)=1-\frac{1}{|\V|}\sum_{v\in\V}\frac{\|\wh{\beta}(v)\|_1}{\|\vec{\beta}(v)\|_1}$$
\end{definition}

Similar homophily metrics, as introduced in \cite{luan2023graph,lim2021new,zhu2020beyond,jin2022raw}, were employed to investigate the concept of heterophily in graph representation learning. In particular, our SCNode features inherently encompass this information, enabling the ML classifier to adapt accordingly. This explains the outstanding performance of SCNode in both homophilic and heterophilic settings for node classification and link prediction.


\begin{remark}[SCNode and Underreaching]
Underreaching arises because a depth-$K$ message-passing GNN is $K$-local: its output at node $u$ depends only on the $K$-hop ego-network and features within it, so nodes that are $K$-hop isomorphic remain indistinguishable. SCNode breaks this locality at the input level before message passing.

From a \emph{spectral perspective}, spatial coordinates $\alpha_k(u)$ are class-distribution statistics over $K$-hop neighborhoods and can be expressed as $\alpha_k = P_k(A)Y$, where $P_k$ is a degree-$K$ polynomial in the normalized adjacency $A$ and $Y$ encodes class indicators on labeled nodes. Thus, $\alpha_k$ lies within the span of polynomial filters in $A$. In contrast, contextual coordinates $\beta(u)=[d(X_u,\xi_1),\ldots,d(X_u,\xi_N)]$ are distances to class landmarks $\{\xi_j\}$ in attribute space and are independent of $A$. Concatenating $[\alpha_k(u)\,\|\,\beta(u)]$ therefore injects a nonlocal, zero-hop channel that cannot be reproduced by any finite-degree polynomial in $A$, strictly enlarging the representable function class.

From an \emph{expressivity perspective}, two nodes that are $K$-hop isomorphic receive identical inputs under any $K$-local MPGNN. With contextual embeddings $\beta(u)$, such nodes can still be distinguished if they occupy different positions relative to class landmarks in attribute space, allowing even a shallow classifier on $[\alpha_k \,\|\, \beta]$ to separate them.

Empirically, Table~\ref{tab:ablation} supports this analysis: contextual-only embeddings succeed in heterophilic graphs where local neighborhoods are misleading, while spatial-only embeddings are strongest on homophilic graphs. Their combination consistently achieves the best results, aligning with the theoretical claim that $\beta$ supplies global class information inaccessible to local filters, while $\alpha$ exploits reliable neighborhood structure.
\end{remark}

\section{Generalizations of SCNode}

\subsection{Inductive and Transductive Settings} \label{sec:inductiveTransductive}

\noindent {\bf Inductive Setting.}  In the inductive learning framework, a model is trained by using the training data $\mathcal{D}_{train}$ while the test data, $\mathcal{D}_{test}$ is completely hidden during the training time. This means that no information about test nodes (e.g., edges between training and test nodes) is provided during the training stage. The learning procedure aims to minimize a suitable loss function to capture the statistical distribution of the training data.  Once an inductive model has undergone training, it can be utilized to make predictions on new (unseen) data, thereby determining the labels for unlabeled nodes. 

\noindent {\bf Transductive Setting.} In the transductive learning framework, which closely aligns with semi-supervised learning, both the training data $\mathcal{D}_{train}$ and the test data $\mathcal{D}_{test}$ can be simultaneously leveraged to capitalize on their interconnectedness. This interrelationship can be employed either during the training phase, the prediction phase, or both. Specifically, in the training stage, the information about $v_{test}$ and its position in the graph is known, while its label $y_{test}$ remains concealed. Consequently, the model is trained with explicit awareness of the nodes it will be evaluated on after the training process. This can serve as a valuable asset for the model, enabling it to establish a sound decision function by exploiting the characteristics observed in $v_{test}$.

To clarify the differences between the inductive and transductive settings in graph-based learning, consider a given graph $\G$, with datasets $\D_{\text{train}}$ and $\D_{\text{test}}$. In the inductive setting, all test nodes and their connected edges are removed to create the training subgraph $\G_{\text{train}}$. The model is trained exclusively on $\G_{\text{train}}$ and only gains access to the complete graph $\G$ during the testing stage. Conversely, in the transductive learning approach, the model has access to the entire graph $\G$ during training; however, the labels of the test nodes remain hidden throughout this process. It is noteworthy that any dataset configured for transductive learning can be adapted to the inductive setting by excluding test nodes and their connecting edges during the training phase. However, converting from the inductive to the transductive setting is not generally feasible. For further details, see the references \cite{ciano2021inductive, arnold2007comparative}.

\subsection{SCNode for Transductive Setting and Iterated Predictions} \label{sec:transductive}

So far, we outlined our SCNode vectors for simplicity in inductive setting. To adapt to the transductive setting, we make adjustments without altering contextual vectors. In transductive setting, test node labels are hidden during training, but connection information to training nodes is available. We introduce a new "unknown" class $\C_u$ for test nodes with unknown labels, considering them as neighbors to training nodes. Each node $u\in\V$ is represented by a $N+1$ dimensional vector $\vec{\alpha}^0_1(u)=[ a^0_{10} \ \ a^0_{11} \ \  a^0_{12}  \ \ \dots \ \ a^0_{1N} ]$, where $a^0_{10}$ is the count of neighboring test nodes (unknown labels) of node $u$, and $a^0_{1j}$ is the count of neighboring training nodes in class $\C_j$ for $1\leq j\leq N$. Similar representations are defined for $\vec{\alpha}_{1i}(u)$, $\vec{\alpha}_{1o}(u)$, and $\vec{\alpha}_{2}(u)$. The superscript $0$ indicates no iterations have occurred yet.

\noindent {\bf Iterated Predictions:} Recall that the ultimate goal in the node classification problem is to predict the labels of new (test) nodes. After we obtain all SCNode vectors for training and test nodes above, we let our ML model make a prediction for each test node $v\in \V_{test}$. Let $\p_0:\V_{test}\to \{\C_1,\dots, \C_N\}$ be our predictions. Hence, we have a label for each node in our graph $\G$. 

\textit{While the original spatial vector $\vec{\alpha}^0_1(u)$ cannot use any class information for the test nodes, in the next step, we remedy this by using our class predictions for test nodes.} In particular, by using the predictions $\p_0$, we define a new (improved) spatial vector
$\vec{\alpha}^1_1(u)=[ a^1_{11} \ \  a^1_{12} \ \  a^1_{13} \ \ \dots \ \ a^1_{1N} ]$ where we use predictions $\p_0$ for neighboring test nodes. Notice that this is no longer a $(N+1)$-dimensional vector as there is no unknown class anymore. Similarly, we update all spatial vectors, train our ML model with these new node labels, and make a new prediction for test nodes. Then, we get new label predictions $\p_1:\V_{test}\to \{\C_1,\dots, \C_N\}$. By using predictions $\p_1$, we define the next iteration $\vec{\alpha}^2_1(u)=[ a^2_{11} \ \  a^2_{12}  \ \ \dots \ \ a^2_{1N} ]$ and train our model with these updated vectors. Again, we get new predictions $\p_2:\V_{test}\to \{\C_1,\dots, \C_N\}$. In our experiments, we observe that $1$ or $2$ iterations ($\p_1$ or $\p_2$) improve the performance significantly, but further iterations do not, in the transductive setting. 



\subsection{Spatial Embeddings on Directed, Weighted Graphs}
\label{sec:directedweighted}
\noindent {\bf Directed Graphs.} When $\G$ is directed, to obtain finer information about node neighborhoods, we produce two different embeddings of size $N$, $\vec{\alpha}_{ki}(u)$ and $\vec{\alpha}_{ko}(u)$ for the k-hop neighborhood:  
 $$\vec{\alpha}_{ki}(u)=[ a^i_{k1} \ \  a^i_{k2}  \ \ \dots \ \ a^i_{kN} ]\quad \vec{\alpha}_{ko}(u)=[ a^o_{k1} \ \  a^o_{k2}  \ \ \dots \ \ a^o_{kN} ]$$
where  $a^i_{kj}$ is the count of k-hop neighbors incoming to $u$ belonging to class $\C_j$ while $a^o_{kj}$ is the count of neighbors outgoing from $u$ belonging to class $\C_j$ (See \Cref{fig:spatial}).

\textbf{Weighted Graphs:}  In weighted graphs, the counts incorporate edge weight information. Specifically, we define the weighted feature vector \( \vec{\alpha}^w_k(u) \) for node \( u \) as:
 $ \vec{\alpha}^w_k(u) = [a^w_{k1}, a^w_{k2}, a^w_{k3}, \dots, a^w_{kN}]  $
where \( a^w_{kj} \) is the sum of the weights of the edges connecting \( u \) to the \( k \)-hop neighbors belonging to class \( \mathcal{C}_j \). If the weight of an edge is inversely proportional to the similarity between nodes, one can use the sum of the reciprocals of the weights instead:
  $a^w_{kj} = \sum_{v \in \mathcal{N}_k(u) \cap \mathcal{C}_j} \frac{1}{\text{weight}(u,v)}$. 
This approach allows for adjustments in how edge weights influence the calculation of proximity and class association. Additionally, in the context of directed graphs within a weighted graph setting, different vectors can be defined for incoming and outgoing neighborhood connections, such as \( \vec{\alpha}^w_{ki}(u) \) for incoming edges and \( \vec{\alpha}^w_{ko}(u) \) for outgoing edges, providing a more detailed representation of node relationships based on directionality and weight.

\begin{table*}[h!]
\centering
    \caption{ The dimension of the SCNode embedding used for each dataset in our model  \label{tab:feature_counts}}    
    \resizebox{\linewidth}{!}{
    \begin{tabular}{lccccccccc}
       \toprule
        & \textbf{CORA} & \textbf{CITESEER} & \textbf{PUBMED} & \textbf{OGBN-ARXIV} & \textbf{OGBN-MAG} &\textbf{TEXAS}&\textbf{CORNELL}&\textbf{WISCONSIN}&\textbf{CHAMELEON}   \\
        \midrule
        \textbf{Dimension} & 42
        & 36 & 112 & 120 & 1,524 &30&30&30&20   \\
       \bottomrule
    \end{tabular}}
\end{table*}

\subsection{Further Ablation Studies} \label{app:ablation}

\textcolor{black}{In this part, we present further ablation studies to evaluate the sensitivity of SCNode to neighborhood size in spatial embeddings and to similarity metrics in contextual embeddings. Recall that the two embeddings serve distinct roles: (1) \textbf{Spatial embeddings} are derived from neighborhood label distributions up to a fixed hop size and do not involve landmarks; (2) \textbf{Contextual embeddings} are constructed using class landmarks in attribute space, with similarity metrics selected according to feature type (e.g., Jaccard for sparse binary features such as Cora and Citeseer, Euclidean for real-valued features such as PubMed).}

\textcolor{black}{\paragraph{Spatial Embeddings: Effect of Neighborhood Size.}
We compared spatial embeddings derived from 1-hop and 2-hop neighborhoods. Results show that 2-hop neighborhoods yield modest accuracy improvements across datasets, though at a higher computational cost. For example, on PubMed the accuracy increases from $77.59 \pm 0.61$ with 1-hop to $82.28 \pm 0.09$ with 2-hop, while runtime grows from 63s to 232s. Based on this trade-off, we adopt 2-hop neighborhoods in the main experiments as a balanced choice. The detailed results are reported in Table~\ref{tab:spatial_ablation}.}

\textcolor{black}{\paragraph{Contextual Embeddings: Effect of Similarity Metrics.}
We further evaluated contextual embeddings using Jaccard, cosine, and Euclidean metrics. On \textbf{binary datasets} (Cora, Citeseer), Jaccard provides the best performance, while cosine achieves similar results but at higher computational cost. Euclidean performs substantially worse in this setting, as expected for sparse binary vectors. On \textbf{real-valued datasets} (PubMed), both Euclidean and cosine produce strong results. These findings confirm that the effectiveness of contextual embeddings depends on aligning the similarity metric with the feature type. Full results are provided in Table~\ref{tab:contextual_ablation}.}

\textcolor{black}{These ablations demonstrate that SCNode is robust to landmark quantity and neighborhood size, and that contextual similarity metrics should be selected according to feature type to achieve the best balance between accuracy and efficiency.}

\begin{table}[h]
\centering
\caption{\textbf{Effect of Neighborhood Size.} Performances of spatial-only vectors used by different k-hop neighborhoods. \label{tab:spatial_ablation}}
    \resizebox{.6\linewidth}{!}{
\begin{tabular}{l|ccc|ccc}
\toprule
 & \multicolumn{3}{c|}{Accuracy} & \multicolumn{3}{c}{Time} \\
\cmidrule(lr){2-7}
\textbf{Ngbd Size} & \textbf{Cora} & \textbf{Citeseer} & \textbf{PubMed} & \textbf{Cora} & \textbf{Citeseer} & \textbf{PubMed} \\
\midrule
\textbf{1-hop} & $83.96 \pm 1.35$ & $71.28 \pm 1.84$ & $77.59 \pm 0.61$ & 1 s & 1 s & 63 s \\
\textbf{2-hop} & $84.61 \pm 1.20$ & $71.82 \pm 1.95$ & $82.28 \pm 0.09$ & 3 s & 3 s & 232 s \\
\bottomrule
\end{tabular}}
\end{table}

\begin{table}[h]
\centering
\caption{ \textbf{Effect of Metric Choice.} Performances of contextual-only vectors obtained by different similarity metrics. \label{tab:contextual_ablation}}
    \resizebox{.6\linewidth}{!}{
\begin{tabular}{l|ccc|ccc}
\toprule
 & \multicolumn{3}{c|}{Accuracy} & \multicolumn{3}{c}{Time} \\
\cmidrule(lr){2-7}
\textbf{Metric} & \textbf{Cora} & \textbf{Citeseer} & \textbf{PubMed} & \textbf{Cora} & \textbf{Citeseer} & \textbf{PubMed}\\
\midrule
Jaccard   & $73.98 \pm 2.56$ & $67.42 \pm 0.85$ & \text{n/a}       & 2 s  & 7 s  & \text{n/a} \\
Cosine    & $73.50 \pm 2.48$ & $67.09 \pm 1.80$ & $85.62 \pm 0.41$ & 5 s  & 15 s & 17 s \\
Euclidean & $62.60 \pm 2.65$ & $60.89 \pm 1.75$ & $85.92 \pm 0.52$ & 2 s  & 6 s  & 17 s \\
\bottomrule
\end{tabular}}
\end{table}

\section{Details of SCNode Embeddings} \label{sec:feature_details}

In all datasets, we basically used the same method to obtain our vectors, however, when the graph type (directed, undirected), or node attribute vector format varies, our methodology naturally adapts the corresponding setting as detailed below. Note that details of OGBN datasets can be found at \cite{hu2020open} and at Stanford's Open Graph Benchmark site \footnote{\url{https://ogb.stanford.edu/docs/nodeprop/}}.


\noindent {\bf CORA:} The CORA dataset is a directed graph of scientific publications classified into one of the $7$ classes. Each node $u$ represents a paper and comes with a binary ($0/1$) vector $X_u$ of length 1433 indicating the presence/absence of the corresponding word in the paper from a dictionary of 1433 unique words. The task is to predict the subject area of the paper, e.g. Neural Networks, Theory, Case-Based. The directed graph approach is used to extract the attribute vector, $\gamma^0(u)$, from the CORA dataset. Recall that this is transductive setting, and the first row is an $8$ dimensional spatial vector $\alpha^0_{1i}(u)$ (\Cref{sec:spatial}) where the first $7$ entries represent the count of citing (incoming) papers from the corresponding class and the $8^{th}$ entry is the count of unknown citing paper $(v_i\in D_{test})$ (\Cref{sec:transductive}). The second row $\alpha^0_{1o}(u)$ is defined similarly using the count of cited (outgoing) papers of the corresponding class. The third and fourth row of the spatial feature vector is obtained similarly using the second neighborhood information of each node for citing and cited paper respectively. For the first iteration $\gamma^1(u)$, the same setup is followed ignoring the $8^{th}$ entry of each row because there is no unknown class now.

For the contextual vector $\vec{\beta}_1(u)$, we follow the landmark approach described in \Cref{sec:domain}. We define the first landmark vector $\xi^1_j$ which is a binary word vector of length 1433 such that the entry is $1$ if the corresponding word is present in any of binary vectors  $\X_u$ belonging to class $\C_j$, and the entry is $0$ otherwise.  Then, each entry $b_{1j}$ of  $\vec{\beta}_1(u)$ is the count of common words between $\X_u$ and $\xi^1_j$ for each class, which produces a $7$ dimensional contextual vector. Similarly, for the contextual vector $\vec{\beta}_2(u)$, we use a more selective landmark vector $\xi^2_j$ is defined as a binary word vector of length 1433 indicating the presence/absence of the corresponding word in at least $10\%$ nodes in the class $\C_j$. Hence, the initial vector $\gamma^0(u)$ is $46$ dimensional ($32$ spatial, $14$ contextual), in the next iterations, $\gamma^1(u),\gamma^2(u)$ are both $42$-dimensional ($28$ spatial, $14$ contextual).


\noindent {\bf CITESEER:} The CITESEER is also a directed graph of scientific publication classified into one of the $6$ classes. Each node represents a paper and comes with a binary vector like CORA from a dictionary of 3703 unique words. Here, the aim of the node classification task is to make a prediction about the subject area of the paper. Since the graph properties/structure and node representing word vector is similar to the CORA dataset, the same feature-extracting techniques for both spatial and contextual vectors is followed here. Hence, 
the initial vector $\gamma^0(u)$ is $40$ dimensional ($28$ spatial, $12$ contextual), in the next iterations, $\gamma^1(u),\gamma^2(u)$ are both $36$-dimensional ($24$ spatial, $12$ contextual).


\noindent {\bf PUBMED:} The PUBMED dataset is a directed graph of $19717$ scientific publications from the PubMed database pertaining to diabetes classified into one of three classes. Each node represents a publication and is described by a TF/IDF weighted word vector from a dictionary which consists of $500$ unique words.  Since the graph structure is quite similar to CORA and CITESEER, a similar method is followed to extract the spatial features. So the initial spatial vector is $16$ dimensional and it is $12$ dimensional for the second iteration.
For the contextual vector $\vec{\beta}(u)$, Principal component analysis (PCA) is used to reduce the dimension of the given weighted word vector from $500$ to $100$. Hence, the initial vector, $\gamma^0(u)$ is $16$ dimensional (spatial only) and the vector is $112$ dimensional ($12$ spatial, $100$ contextual) in the next iteration.  

\noindent {\bf OGBN-ARXIV:} The OGBN-ARXIV dataset is a directed graph, representing the citation network between all Computer Science (CS) arXiv papers indexed by MAG \cite{wang2020microsoft}. Each node is an arXiv paper and each directed edge indicates that one paper cites another one. Each paper comes with a 128-dimensional vector obtained by averaging the embeddings of words in its title and abstract. The embeddings of individual words are computed by running the skip-gram model \cite{mikolov2013distributed} over the MAG corpus. We also provide the mapping from MAG paper IDs into the raw texts of titles and abstracts here. In addition, all papers are also associated with the year that the corresponding paper was published. The task is to predict the 40 subject areas of arXiv CS papers, e.g., cs.AI, cs.LG, and cs.OS.

The vector $\gamma(u)$ for OGBN-ARXIV is obtained by using our directed graph approach (\Cref{sec:spatial}). The first row of spatial vector $\alpha_{1i}(u)$ is $40$-dimensional, and each entry is the count of citing (incoming) papers from the corresponding class. The second row is $\alpha_{1o}(u)$ and is defined similarly, where each entry is the count of cited (outgoing) papers from the corresponding class. For contextual vector $\beta(u)$, follow the landmark approach described in \Cref{sec:domain}, employing just one landmark. Since the vectors are weighted vector, we use Euclidean distance to determine the distance between the landmark and a given node. Hence, for each node $\gamma(u)$ has $80$-dimensional spatial, and $40$-dimensional contextual vector, which totals $120$-dimensional SCNode vector.

\noindent {\bf OGBN-MAG:} The OGBN-MAG dataset is a heterogeneous network composed of a subset of the Microsoft Academic Graph (MAG) \cite{wang2020microsoft}. It contains four types of entities—papers (736,389 nodes), authors (1,134,649 nodes), institutions (8,740 nodes), and fields of study (59,965 nodes)—as well as four types of directed relations connecting two types of entities—an author is “affiliated with” an institution, an author “writes” a paper, a paper “cites” a paper, and a paper “has a topic of” a field of study. Similar to OGBN-ARXIV, each paper is associated with a 128-dimensional word2vec vector, and all the other types of entities are not associated with input node features. Given the heterogeneous OGBN-MAG data, the task is to predict one of 349 venues (conference or journal) of each paper.

The vector $\gamma(u)$ for OGBN-MAG is a bit different than OGBN-ARXIV, as OGBN-MAG is a heterogeneous network. We first collapse the network to a homogeneous network for papers. Similar to OGBN-ARXIV, we obtain $349$-dimensional spatial vectors, i.e., $\alpha_{1i}(u)$ (citing papers), and $\alpha_{1o}(u)$ (cited papers). As another spatial vector from a different level of the heterogeneous network, we use author information as follows. Each author has a natural $349$-dimensional vector where each entry is the number of papers the author published in the corresponding venue. For each paper, we consider the author with the most publications and assign their attribute vector to the paper's attribute vector. We call it author vector $\alpha_{author}(u)$. We construct a similar set of vectors for field of study - another type of node information. Each paper belongs to 1 or more fields of studies (or topics), and for each unique topic, we construct a attribute vector $T = \{t_1, t_2, ..., t_{num\_cls}\}$ such that $t_i$ denotes the number of papers assigned to venue $i$ for the given topic. We then aggregate these topic attribute vectors for each paper as follows: for a given paper with assigned topics $topic_1 ... topic_m$, let $\alpha_{topic}(u) = \sum_{i=1}^mT_i$ and append this final aggregate vector to the paper's attribute vector. For contextual vector $\beta(u)$, we directly use a $128$-dimensional vector for each node as it is.  Hence, $\gamma(u)$ is concatenation of spatial vectors $\alpha_{1i}(u)$, $\alpha_{1o}(u)$, $\alpha_{author}(u)$, $\alpha_{topic}(u)$, and $\beta(u)$ which totals $4 \cdot 349 + 128=1524$ dimensional vector.

\noindent {\bf WebKB (TEXAS, CORNELL and WISCONSIN):} Carnegie Mellon University collected the WebKB dataset from computer science departments of various universities. Three subsets of this dataset are TEXAS, CORNELL and WISCONSIN. The dataset contains links between web pages, indicating relationships like ``is located in'' or ``is a student of'' forming a directed graph structure. Node features are represented as bag-of-words, creating a binary vector for each web page. The classification task involves categorizing nodes into five types: student, project, course, staff, and faculty. Similar to the CORA dataset, these datasets share a directed graph structure and binary vector representation for node features, leading to the use of comparable feature extraction methods for spatial and contextual vectors. Therefore, the initial attribute vector $\gamma^0(u)$ is $34$ dimensional, comprising $24$ spatial and $10$ contextual dimensions. In subsequent iterations, it is $30$-dimensional, with $20$ spatial and $10$ contextual dimensions each.

\noindent {\bf Wikipedia Network (CHAMELEON):} The dataset depict page-page networks focused on specific topics such as chameleons. In these networks, nodes represent articles, and edges denote mutual links between them. Node features are derived from several informative nouns found in the corresponding Wikipedia pages. If a feature is present in the feature list, it signifies the occurrence of an informative noun in the text of the Wikipedia article. The objective is to classify the nodes into five categories based on the average monthly traffic of the respective web pages. In the context of the Wikipedia network, each link between articles does not imply a one-way relationship; instead, it signifies a mutual connection between the two articles, making it an undirected graph. Therefore, undirected feature extraction approaches are employed for $\gamma^0(u)$.

Regarding spatial features, the first row consists of a 6-dimensional vector. The first five entries represent the count of five classes, while the sixth entry represents the count of the unknown class in the 1-hop neighborhood. In the second row, the same procedure is applied for the 2-hop neighborhood. Subsequent iterations follow a similar process, but the $6^{th}$ entries are ignored because there is no unknown class at this point in the analysis. For contextual features in the Wikipedia network, a similar approach as used for the CORA dataset is employed due to their analogous nature. This approach results in a 10-dimensional vector. Consequently, the initial vector is 22-dimensional, and subsequent iterations reduce it to 20-dimensional.

\end{document}